\newtheorem{lemma}{Lemma}
\newtheorem{theorem}{Theorem}
\newtheorem{corollary}{Corollary}
\newtheorem{proposition}{Proposition}
\newtheorem{assumption}{Assumption}
\newcommand{\REPS}{\texttt{REPS}\xspace}
\newcommand{\QREPS}{\texttt{Q-REPS}\xspace}
\newcommand{\OPT}{\texttt{MinMax-Q-REPS}\xspace}
\newcommand{\QREPSEVAL}{\texttt{Q-REPS-Eval}\xspace}
\newcommand{\qq}{\mu}
\newcommand{\qqd}{d}
\newcommand{\dd}{d}
\newcommand{\tq}{\wt{\qq}}
\newcommand{\td}{\wt{\dd}}
\renewcommand{\SS}{\mathcal{S}}
\newcommand{\wh}{\widehat}
\newcommand{\wt}{\widetilde}
\newcommand{\X}{\mathcal{X}}
\newcommand{\A}{\mathcal{A}}
\newcommand{\LL}{\mathcal{L}}
\newcommand{\real}{\mathbb{R}}
\newcommand{\GG}{\mathcal{G}}
\newcommand{\MM}{\mathcal{M}}
\newcommand{\UU}{\mathcal{U}}
\newcommand{\DD}{\mathcal{D}}
\newcommand{\D}[2]{D\left(#1\middle\|#2\right)}
\newcommand{\QQ}{\mathcal{Q}}
\newcommand{\NN}{\mathcal{N}}
\newcommand{\CC}{\mathcal{C}}
\newcommand{\II}[1]{\mathbb{I}_{\left\{#1\right\}}}
\newcommand{\EE}[1]{\mathbb{E}\left[#1\right]}
\newcommand{\EEs}[2]{\mathbb{E}_{#2}\left[#1\right]}
\newcommand{\EEc}[2]{\mathbb{E}\left[#1\left|#2\right.\right]}
\def\argmin{\mathop{\mbox{ arg\,min}}}
\def\argmax{\mathop{\mbox{ arg\,max}}}
\newcommand{\ra}{\rightarrow}
\newcommand{\bone}{\bm{1}}
\newcommand{\iprod}[2]{\left\langle#1,#2\right\rangle}
\newcommand{\biprod}[2]{\bigl\langle#1,#2\bigr\rangle}
\newcommand{\norm}[1]{\left\|#1\right\|}
\newcommand{\onenorm}[1]{\norm{#1}_1}
\newcommand{\infnorm}[1]{\norm{#1}_\infty}
\newcommand{\ev}[1]{\left\{#1\right\}}
\newcommand{\pa}[1]{\left(#1\right)}
\newcommand{\bpa}[1]{\bigl(#1\bigr)}
\newcommand{\bW}{\overline{W}}
\newcommand{\tnu}{\wt{\nu}}
\newcommand{\transpose}{^\mathsf{\scriptscriptstyle T}}
\definecolor{PalePurp}{rgb}{0.66,0.57,0.66}
\newcommand{\vtet}{\theta}
\newcommand{\dual}{\mathcal{G}}
\newcommand{\ymu}{z}
\newcommand{\Vtet}{V_\vtet}
\begin{document}

\twocolumn[

\aistatstitle{Logistic Q-Learning}

\aistatsauthor{Joan Bas-Serrano \And Sebastian Curi \And Andreas Krause \And  Gergely Neu }

\aistatsaddress{\small{Universitat Pompeu Fabra} \And  \small{ETH Z\"urich} \And \small{ETH Z\"urich} \And 
\small{Universitat Pompeu Fabra}  } ]

\begin{abstract}
We propose a new reinforcement learning algorithm derived from a regularized linear-programming formulation of 
optimal control in MDPs. The method is closely related to the classic Relative Entropy Policy Search (\REPS) algorithm 
of \citet{peters10reps}, with the key difference that our method introduces a Q-function that enables efficient 
exact model-free implementation. The main feature of our algorithm (called \QREPS) is a convex loss 
function for policy evaluation that serves as a theoretically sound alternative to the widely used squared Bellman 
error.  We provide a practical saddle-point optimization method for minimizing this loss function and provide an 
error-propagation analysis that relates the quality of the individual updates to the performance of the output policy. 
Finally, we demonstrate the effectiveness of our method on a range of benchmark problems. 
\end{abstract}

\section{INTRODUCTION}
While the squared Bellman error is a broadly used loss function for approximate dynamic programming and reinforcement learning (RL), it has a number of undesirable properties: it is not directly motivated by standard Markov Decission Processes (MDP) theory, not convex in the action-value function parameters, and RL  algorithms based on its recursive optimization are known to be unstable 
\citep{GPP17,MM20}. 
In this paper, we offer a remedy to these issues by proposing a new RL algorithm utilizing an objective-function free from these problems. 
Our approach is based on the seminal Relative Entropy Policy Search (\REPS) algorithm of \citet*{peters10reps}, with a number of newly introduced elements that make the algorithm significantly more practical.

While \REPS is elegantly derived from a principled linear-programing (LP) formulation of optimal control in MDPs, it 
has the serious shortcoming that its faithful implementation requires access to the true MDP for both the policy 
evaluation and 
improvement steps, even at deployment time. The usual way to address this limitation is to use an empirical 
approximation to the policy evaluation step and to project the policy from the improvement step into a parametric space 
\citep{deisenroth2013survey}, losing all the theoretical guarantees of \REPS in the process.

In this work, we propose a new algorithm called \QREPS that eliminates this limitation of \REPS by introducing a simple 
softmax policy improvement step expressed in terms of an action-value function that naturally arises from a regularized LP formulation. 
The action-value functions are obtained by minimizing a convex loss function that we call the \emph{logistic Bellman error} (LBE) due to its analogy with the classic notion of Bellman error and the logistic loss for logistic regression. 
The LBE has numerous advantages over the most commonly used notions of Bellman error: unlike the squared Bellman error, 
the logistic Bellman error is convex in the action-value function parameters, smooth, and has bounded gradients (see 
Figure~\ref{fig:square_bellman_error}). 
This latter property obviates the need for the heuristic technique of gradient clipping (or using the Huber loss in 
place of the square loss), a commonly used optimization trick to improve stability of training of deep RL algorithms 
\citep{mnih2015human}.

Besides the above favorable properties, \QREPS comes with rigorous theoretical guarantees that establish its 
convergence to the optimal policy under appropriate conditions. Our main theoretical contribution is an 
error-propagation analysis that relates the quality of the optimization subroutine to the quality of the policy output 
by the algorithm, showing that convergence to the optimal policy can be guaranteed if the optimization errors are kept 
sufficiently small. Together with another result that establishes a bound on the bias of the empirical LBE in terms of 
the regularization parameters used in \QREPS, this justifies the approach of minimizing the empirical objective under 
general conditions. For the concrete setting of factored linear MDPs, we provide a bound on the rate of 
convergence. 

Our main algorithmic contribution is a saddle-point optimization framework for optimizing the empirical version of the 
LBE that formulates the minimization problem as a two-player game between a \emph{learner} and a \emph{sampler}. The 
learner plays stochastic gradient descent (SGD) on the samples proposed by the sampler, and the sampler updates its 
distribution over the sample transitions in response to the observed Bellman errors. We evaluate the resulting 
algorithm experimentally on a range of standard benchmarks, showing excellent empirical performance of \QREPS.

\begin{figure}[t]
\includegraphics[width=\columnwidth]{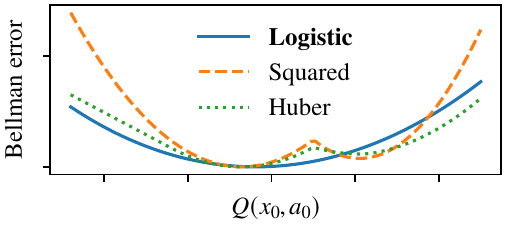} 
\caption{Squared Bellman error considered harmful: Loss functions plotted as a function of the Q-value at a 
fixed state-action pair while keeping other values fixed. 
} \label{fig:square_bellman_error}
\end{figure}

\vspace{-.2cm}
\paragraph{Related Work.}
Despite the enormous empirical successes of deep reinforcement learning, we understand little about the convergence of 
the algorithms that are commonly used. The use of the squared Bellman error for deep reinforcement learning has been 
popularized in 
the breakthrough paper of \citet{mnih2015human}, and has been \emph{exclusively} used for policy evaluation ever since. 
Indeed, while several algorithmic improvements have been proposed for improving policy updates over the past few years, 
the squared Bellman error remained a staple: among others, it is used for policy evaluation in TRPO~\citep{SLAJM15}, 
SAC~\citep{haarnoja2018soft}, A3C~\citep{M+16}, TD3~\citep{fujimoto2018addressing}, MPO~\citep{abdolmaleki2018maximum} 
and POLITEX~\citep{abbasi2019politex}. Despite its extremely broad use, the squared Bellman error suffers from a range 
of well-known issues pointed out by several authors including \citet[Chapter 11.5]{SB18}, \citet{GPP17}, and 
\citet{MM20}. While some of these have been recently addressed by \citet{dai2018sbeed} and \citet{FLL19}, several 
concerns remain.

On the other hand, the RL community has been very productive in developing novel policy-improvement rules: since the 
seminal work of \citet{kakade2002approximately} established the importance of soft policy updates for dealing with 
policy-evaluation errors, several practical update rules have been proposed and applied successfully in the context of 
deep RL---see the list we provided in the previous paragraph. Many of these soft policy updates are based on the idea 
of \emph{entropy regularization}, first explored by \citet{K01} and \citet{ziebart08MaxEntIRL} and inspiring 
an impressive number of followup works eventually unified by \citet{NJG17} and \citet{geist2019theory}. A particularly 
attractive feature of entropy-regularized methods is that they often come with a closed-form ``softmax'' policy update 
rule that is easily expressed in terms of an action-value function. A limitation of these methods is that they 
typically don't come with a theoretically well-motivated loss function for estimating the value functions and end up 
relying on the squared Bellman error. One notable exception is the REPS algorithm of \citet{peters10reps} that comes with a 
natural loss function for policy evaluation, but no tractable policy-update rule.

The main contribution of our work is proposing \QREPS, a mirror-descent algorithm that comes with both a natural loss 
function and an explicit and tractable policy update rule, both derived from an entropy-regularization perspective. 
These properties make it possible to implement \QREPS \emph{entirely faithfully} to its theoretical specification in a 
deep reinforcement learning context, modulo the step of using a neural network for parametrizing the Q function. This 
implementation is justified by our main theoretical result, an error propagation analysis accounting for the 
optimization and representation errors.

Our error propagation analysis is close in spirit to that of \citet{scherrer2015approximate}, recently extended to 
entropy-regularized approximate dynamic programming  algorithms by \citet{geist2019theory}, 
\citet{vieillard2020leverage}, and \citet{vieillard2020munchausen}. 
One major difference between our approaches is that their guarantees depend on the $\ell_p$ norms of the policy 
evaluation errors, but still optimize squared-Bellman-error-like quantities that only serve as proxy for these errors. 
In contrast, our analysis studies the propagation of the optimization errors on the objective function that is 
\emph{actually optimized} by the algorithm.

\vspace{-.19cm}
\paragraph{Notation.} We use $\iprod{\cdot}{\cdot}$ to denote inner products in Euclidean space and $\real_+$ to denote 
the set of non-negative real numbers. For two vectors $v,w\in\real^m$, we will use the notation $v\ge w$ to denote 
elementwise inequality holding in the sense $v-w\in\real_+^m$.
We will often write indefinite sums $\sum_{x,a}$ to denote sums over 
the 
entire state-action space $\X\times\A$, and write $p(x,a)\propto q(x,a)$ to signify that $p(x,a) = q(x,a) / 
\sum_{x',a'} q(x',a')$ for a nonnegative function $q$ over $\X\times\A$.

\section{BACKGROUND}
Consider a Markov decision process (MDP, \citealp{Puterman1994}) defined by the tuple $M = (\mathcal{X}, 
\mathcal{A},  P, r)$, where  $\mathcal{X}$ is the state space, $\mathcal{A}$ is the 
action space, $P$ is the transition function with $P(\cdot|x,a)$ denoting the distribution of the follow-up state
$x'$ after taking action $a\in\mathcal{A}$ in state $x\in\mathcal{X}$,
and $r$ is the reward function mapping state-action pairs to rewards 
with $r(x,a)$ denoting the reward of being in state $x$ and taking action $a$. For simplicity of presentation, 
we assume that the rewards are deterministic and bounded in $[0,1]$, and that the state action spaces are finite (but 
potentially very large). An MDP models a sequential interaction process between an agent and its environment 
where in each round $t$, the agent observes state  $x_t\in\X$, selects action $a_t\in\mathcal{A}$, moves to the next 
state $x_{t+1} \sim P(\cdot|x_t,a_t)$, and obtains reward $r(x_t, a_t)$. 
The goal of the agent is to select actions so as to maximize the \emph{normalized discounted 
return} $R = (1-\gamma)\mathbb{E}\left[\sum_{t=0}^\infty \gamma^t r(x_t,a_t)\right]$, where $\gamma\in(0,1)$ is the 
discount factor and the state $x_0$ is drawn from a fixed initial-state distribution $\nu_0$.

We will heavily rely on a \emph{linear programming} (LP) characterization of optimal policies originally due to 
\citealp{Man60}. This approach aims to directly find a 
\emph{normalized discounted state-action occupancy measure} (in short, \emph{occupancy measure}) $\qq(x,a) = 
(1-\gamma)\EE{\sum_{t=0}^\infty \gamma^t \II{(x_t,a_t) = (x,a)}}$ with $x_0 \sim \nu_0$ that maximizes the discounted 
return that can simply be written as $R = \sum_{x,a} \qq(x,a) r(x,a)$. 
From every valid occupancy measure $\qq$, one can derive a \emph{stationary stochastic policy} (in short, 
\emph{policy}) $\pi_\qq$ defined as the conditional distribution $\pi_\qq(a|x) = \qq(x,a) / \sum_{a'} \qq(x,a')$ over 
actions $a$ for each state $x$. Following the policy $\pi_\qq$ by drawing each action $a_t \sim \pi(\cdot|x_t)$ can be 
shown to yield $\qq$ as the occupancy measure. We briefly describe the characterization of optimal policies in 
these terms below, and refer the interested reader to Section~6.9 of \citet{Puterman1994} for a 
more detailed discussion.

For a compact notation, we will represent the decision variables $\qq$ as vectors in $\real^{\X\times\A}$ and 
introduce 
the linear operator $P\transpose:\real^{\X\times\A}\ra\real^{\X}$ defined for each $\qq$ through $(P\transpose \qq)(x') 
= \sum_{x,a} P(x'|x,a) \qq(x,a)$ for all $x'$. Similarly, we define the operator $E\transpose$ acting on $\qq$ through 
the 
assignment $(E\transpose \qq)(x) = \sum_{a}  \qq(x,a)$ for all $x$. With this notation, the task of finding an optimal 
occupancy measure can be written as the solution of the following linear program:
\begin{equation}\label{eq:primalLP}
\begin{split}
 \text{maximize}_{\qq \in \real^{\X\times\A}_+} \quad &\iprod{\qq}{r}
 \\
 \text{s.t.} \quad & E\transpose \qq = \gamma P\transpose \qq + (1-\gamma) \nu_0.
\end{split}
\end{equation}
The above set of constraints is known to uniquely characterize the set of all valid occupancy measures, which set will 
be denoted as $\MM^*$ from here on. Due to this property, any solution $\qq^*$ of the LP maximizes the total discounted 
return and the corresponding policy $\pi^* = \pi_{\qq^*}$ is optimal in the sense that choosing actions as 
$a_t\sim\pi^*(\cdot|x_t)$ yields maximal return.
The dual of the linear program~\eqref{eq:primalLP} is
\begin{equation}\label{eq:dualLP}
\begin{split}
 \text{minimize}_{V\in\real^\X} \quad & (1-\gamma) \iprod{\nu_0}{V}
 \\
 \text{s.t.} \quad & EV \ge r + \gamma PV,
\end{split}
\end{equation}
where we used the adjoint operators $E$ and $P$, acting on $V$ as $(EV)(x,a) = V(x)$ and $(PV)(x,a) = \sum_{x'} 
P(x'|x,a)V(x')$ for all $x,a$. The solution of this LP can be shown to be equivalent to the celebrated Bellman 
optimality equations in the sense that the so-called \emph{optimal value function} $V^*$ is an optimal solution of this 
LP, and is the unique optimal solution if $\nu_0$ has full support over the state space.

\paragraph{Relative Entropy Policy Search.} Our approach is directly inspired by  the seminal \emph{relative entropy 
policy search} (\REPS) algorithm proposed by \citet*{peters10reps}. The core ideas underlying \REPS are adding a 
strongly convex regularization function to the objective of the LP~\eqref{eq:primalLP} and relaxing the primal 
constraints through the use of a feature map $\psi:\X\ra\real^m$. Introducing the operator $\Psi\transpose$ acting on 
$q\in\real^\X$ as $\Psi\transpose q = \sum_{x} q(x) \psi(x)$, and letting $\mu_0$ be an arbitrary state-action 
distribution, \REPS is defined as an iterative optimization scheme that produces a sequence of occupancy measures as 
follows:
\begin{equation}\label{eq:REPS_OP}
\begin{split}
 \qq_{k+1} = \max_{\qq \in \real^{\X\times\A}_+} \quad &\iprod{\qq}{r} - \frac{1}{\eta} \D{\qq}{\qq_{k}}
 \\
 \text{s.t.} \quad & \Psi\transpose E\transpose \qq = \Psi\transpose \pa{\gamma P\transpose \qq + (1-\gamma)\nu_0}.
\end{split}\raisetag{1.3cm}
\end{equation}
Here, $\D{\qq}{\qq'}$ is the \emph{unnormalized relative entropy} (or Kullback--Leibler divergence) between the 
distributions $\qq$ and 
$\qq'$ defined as
$\D{\qq}{\qq'} = \sum_{x,a} \bpa{\qq(x,a) \bpa{\log \frac{\qq(x,a)}{\qq'(x,a)} - 1} + \qq'(x,a)}$.
Introducing the 
notation $V_\theta = \Psi\theta$ and 
$\delta_\theta = r + \gamma PV_\theta - EV_\theta$, the unique optimal solution to this optimization problem can be 
written as
\begin{equation}\label{eq:REPS-update}
 \qq_{k+1}(x,a) = \qq_{k}(x,a) e^{\eta \pa{\delta_{\theta_k}(x,a) - \rho_k}},
\end{equation}
where $\rho_k$ is a normalization constant and $\theta_k$ is given as the minimizer of the \emph{dual function} given as
\begin{equation}\label{eq:REPS-dual}
 \mathcal{G}_k(\theta) =\sum_{x,a} \qq_{k}(x,a) e^{\eta\delta_\theta (x,a)} + (1-\gamma) \iprod{\nu_0}{V_\theta}.
\end{equation}
As highlighted by \citet{ZiNe13} and \citet{NJG17}, REPS can be seen as a \emph{mirror descent} algorithm
\citep{Martinet1970,R76,BT03}, and thus its iterates $\qq_k$ are 
guaranteed to converge to an optimal occupancy measure $\mu^*$.

Despite its exceptional elegance, the formulation above has a number of features that limit its practical 
applicability. One very serious limitation of REPS is that its output policy $\pi_{K}$ involves an expectation with respect to the transition function, thus requiring knowledge of $P$ to 
run the policy. Another issue is that optimizing an empirical version of the loss~\eqref{eq:REPS-dual} as originally 
proposed by 
\citet{peters10reps} may be problematic due to the empirical loss being a biased estimator of the true 
objective~\eqref{eq:REPS-dual} caused by the conditional expectation appearing in the exponent.

\paragraph{Deep $Q$-learning.} Let us contrast REPS with the emblematic deep RL approach of Deep Q Networks 
(DQN) as proposed by \citet{mnih2015human}. This algorithm aims to approximate the \emph{optimal action-value function} 
$Q^*(x,a)$ which is known to characterize optimal behaviors: any policy that puts all probability mass on $\argmax_a 
Q^*(x,a)$ is optimal. Using the notation $\norm{f}_\qq^2 = \sum_{x,a} \qq(x,a) f^2(x,a)$, the main 
idea of DQN is to sequentially compute approximations of $Q^*$ by 
minimizing the \emph{squared Bellman error}:
\begin{equation}\label{eq:sqBellman}
Q_{k+1} = \argmin_{Q\in\QQ} \norm{r + \gamma P V_Q - Q}^2_{\qq_{k}},
\end{equation}
where $\QQ$ is some class of action-value functions (e.g., a class of neural networks), 
$V_Q(x) = \max_a Q(x,a)$, and $\qq_{k}$ is the state distribution generated by the policy $\pi_{k}$. A major 
advantage of this formulation is that, having access to Q-functions, it is trivial to compute policy updates, typically 
by choosing near-greedy policies with respect to $Q_k$. However, it is well known 
that the squared Bellman error objective above suffers from a number of serious problems: its 
lack of convexity in $Q$ prevents efficient optimization even under the simplest parametrizations, 
and the conditional expectation appearing within the squared norm makes its empirical estimate severely biased.

\citet{mnih2015human} addressed these issues by using a number of ideas from the approximate dynamic programming 
literature (see, e.g., \citealp{riedmiller2005neural}), eventually resulting in 
spectacular empirical performance on a range of highly challenging problems. Despite these successes, the heuristics 
introduced to stabilize DQN training are arguably only surface-level patches: the convergence of the resulting scheme 
can only be guaranteed under extremely strong conditions on the function class $\QQ$ and the data-generating 
distribution 
\citep{melo2007q,szepes06learning,GPP17,fan2020theoretical,MM20}. Altogether, these observations  suggest that the 
squared Bellman error has fundamental limitations that have to be addressed from first principles.

\paragraph{Our contribution.} In this paper, we address the above issues by proposing a new algorithmic framework that 
unifies the advantages of \REPS and DQNs, while removing their key limitations. Our approach (called \QREPS) endows 
\REPS with a Q-function fully specifying the policy updates, thus enabling efficient model-free implementation akin to 
DQNs. Similarly to \REPS, the Q-functions of \QREPS are obtained by minimizing a convex objective function (that we 
call \emph{logistic Bellman error}) naturally derived from a regularized LP formulation. 
We provide a practical framework for optimizing this objective and provide formal performance guarantees for 
the resulting algorithm.

\section{Q-REPS}
This section presents our main contributon: the derivation of the \QREPS algorithm in its primal and dual forms, and an 
efficient reinforcement learning algorithm that approximately implements the \QREPS policy updates using sample 
transitions.

One key technical idea underlying our algorithm design is a \emph{Lagrangian decomposition} of the linear 
program~\eqref{eq:primalLP}. Specifically, we introduce an additional set of primal variables $d \in 
\real^{\X\times\A}$ and split the constraints of the LP as follows:
\begin{equation}\label{eq:primalLP_decomp}
\begin{split}
 \text{maximize}_{\qq,d} \quad &\iprod{\qq}{r}
 \\
 \text{s.t.} \quad & E\transpose d = \gamma P\transpose \qq + (1-\gamma) \nu_0
 \\
 & \,\quad d = \qq, \qquad \qq\in\real^{\X\times\A}, d\in\real_+^{\X\times\A}.
\raisetag{1.5cm}
\end{split}
\end{equation}
The additional set of variables $\dd$ can be thought of as a ``mirror image'' of $\qq$.
By straightforward calculations, the dual of this LP can be shown to be
\begin{equation}\label{eq:dualLP_decomp}
\begin{split}
 \text{minimize}_{V\in\real^\X,Q\in\real^{\X\times\A}} \quad & (1-\gamma) \iprod{\nu_0}{V}
 \\
 \text{s.t.} 
 \quad 
 & Q = r + \gamma PV, \qquad  EV \ge Q.\raisetag{1.1cm}
\end{split}
\end{equation}
The optimal value functions $V^*$ and $Q^*$ can be easily seen to be optimal solutions of this decomposed LP. A 
clear advantage of this formulation that we will take advantage of is that it naturally introduces Q-functions as slack 
variables enforcing to the newly introduced primal constraints $d=\mu$.
To our best knowledge, this LP has been first proposed by \citet{mehta2009q} and has been recently rediscovered by 
\citet{lee2019stochastic} and \citet{NPB20} and revisited by \citet{MM20}.

Inspired by \citet{peters10reps}, we make two key modifications to this LP to derive our algorithm: introduce a convex 
regularization term in the objective and relax some of the constraints. For this latter step, we introduce a 
state-action feature map $\varphi:\real^{\X\times\A}\ra\real^m$ and the corresponding linear operator $\Phi\transpose$ 
acting on $\qq$ as $\Phi\transpose \qq = \sum_{x,a} \qq(x,a) \varphi(x,a)$. 
Further, we propose to augment the relative-entropy regularization used in \REPS by a 
\emph{conditional relative entropy} term defined between two state-action 
distributions $d$ and $d'$ as $H(d\|d') = \sum_{x,a} d(x,a) \log\frac{\pi_d(a|x)}{\pi_{d'}(a|x)}$. A minor change is 
that we will restrict $d$ and $\qq$ to belong to the set of 
probability distributions over $\X\times\A$, denoted as $\UU$.

Letting $\qq_0$ and $d_0$ be two arbitrary reference distributions and denoting the corresponding policy as $\pi_0 = 
\pi_{d_0}$, and letting $\alpha$ and $\eta$ be two positive parameters, we define the primal \QREPS optimization 
problem as follows:
\begin{equation}\label{eq:QREPS_OP}
\begin{split}
 \text{maximize}_{\qq,d\in\UU}& \quad \iprod{\qq}{r} - \frac{1}{\eta} D(\qq\|\qq_0) - \frac{1}{\alpha} H(d\|d_0)
 \\
 \text{s.t.} \quad & E\transpose d = \gamma P\transpose \qq + (1-\gamma) \nu_0
 \\
 & \Phi\transpose d = \Phi\transpose \qq.
\end{split}\raisetag{.99cm}
\end{equation}
The following proposition characterizes the optimal solution of this problem.
\begin{proposition}\label{prop:QREPS-structure}
 Define the Q-function $Q_\theta = \Phi\theta$ taking values $Q_\theta(x,a) = \iprod{\theta}{\varphi(x,a)}$, the value 
function
 \begin{equation}\label{eq:value}
  V_\theta(x) = \frac1{\alpha}\log\left(\sum_a  \pi_0(a|x) e^{\alpha Q_\theta(x,a)}\right)
 \end{equation}
and the Bellman error function $\Delta_\theta = r + \gamma P V_\theta - Q_\theta$. 
Then, the optimal solution of the optimization problem~\eqref{eq:QREPS_OP} is given as
\begin{align*}
 \qq^*(x,a) &\propto \qq_0(x,a) e^{\eta \Delta_{\theta^*}(x,a)}
 \\
 \pi_{d^*}(a|x) &= \pi_0(a|x) e^{\alpha \bpa{Q_{\theta^*}(x,a) - V_{\theta^*}(x)}}, 
\end{align*}
where $\theta^*$ is the minimizer of the convex function
\begin{align*}\label{eq:QREPS-dual}
 \mathcal{G}(\theta) =&\frac1{\eta} \log \left(\sum_{x,a}  {\qq_0(x,a) e^{\eta \Delta_\theta(x,a)}}\right) + 
(1-\gamma) \iprod{\nu_0}{V_\theta}.
\end{align*}
\end{proposition}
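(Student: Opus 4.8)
The plan is to obtain the solution through Lagrangian duality, exploiting that the objective is concave (a linear term minus the convex regularizers $\frac1\eta\D{\qq}{\qq_0}$ and $\frac1\alpha H(d\|d_0)$) while the constraints are affine. First I would attach a multiplier $V\in\real^\X$ to the flow constraint $E\transpose d=\gamma P\transpose\qq+(1-\gamma)\nu_0$ and a multiplier $\theta\in\real^m$ to the feature-matching constraint $\Phi\transpose d=\Phi\transpose\qq$, forming the Lagrangian $\LL(\qq,d,V,\theta)$ over $\qq,d\in\UU$. Because a full-support occupancy measure $\qq=d\in\MM^*$ is strictly feasible, Slater's condition holds, strong duality applies, and the optimal primal variables are exactly the maximizers of $\LL(\cdot,\cdot,V^*,\theta^*)$. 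Using the adjoint identities $\iprod{V}{P\transpose\qq}=\iprod{\qq}{PV}$, $\iprod{V}{E\transpose d}=\iprod{d}{EV}$ and $\iprod{\theta}{\Phi\transpose\qq}=\iprod{\qq}{Q_\theta}$, the Lagrangian splits additively into a $\qq$-part, a $d$-part, and the constant $(1-\gamma)\iprod{\nu_0}{V}$; fixing the sign of $\theta$ so that $\qq$ carries coefficient $r+\gamma PV-Q_\theta$, the two inner maximizations decouple completely.

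Next I would solve both inner problems via the Gibbs variational principle. Maximizing $\iprod{\qq}{r+\gamma PV-Q_\theta}-\frac1\eta\D{\qq}{\qq_0}$ over $\qq\in\UU$ is a relative-entropy-regularized linear program whose maximizer is the tilted law $\qq\propto\qq_0 e^{\eta(r+\gamma PV-Q_\theta)}$ with optimal value the log-partition $\frac1\eta\log\sum_{x,a}\qq_0(x,a)e^{\eta(r+\gamma PV-Q_\theta)(x,a)}$. For the $d$-part I would factor $d(x,a)=\nu_d(x)\pi_d(a|x)$ and note that the conditional relative entropy $H(d\|d_0)=\sum_x\nu_d(x)\D{\pi_d(\cdot|x)}{\pi_0(\cdot|x)}$ touches only the policy factor; the per-state maximization of $\sum_a\pi_d(a|x)Q_\theta(x,a)-\frac1\alpha\D{\pi_d(\cdot|x)}{\pi_0(\cdot|x)}$ is again a soft-max, giving the claimed $\pi_d(a|x)=\pi_0(a|x)e^{\alpha(Q_\theta(x,a)-V_\theta(x))}$ with per-state value exactly $V_\theta(x)=\frac1\alpha\log\sum_a\pi_0(a|x)e^{\alpha Q_\theta(x,a)}$, and the surviving term $\sum_x\nu_d(x)(V_\theta(x)-V(x))$ maximized over the simplex reduces to $\max_x(V_\theta(x)-V(x))$.

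The crux is then to eliminate $V$ and identify it with the value function $V_\theta$. After the inner maximizations the dual reads $\dual(V,\theta)=\frac1\eta\log\sum_{x,a}\qq_0 e^{\eta(r+\gamma PV-Q_\theta)}+\max_x\pa{V_\theta(x)-V(x)}+(1-\gamma)\iprod{\nu_0}{V}$, a convex function of $V$. I would verify the optimality condition $0\in\partial_V\dual(V_\theta,\theta)$ at $V=V_\theta$: there the first term has gradient $\gamma P\transpose\qq^*$ with $\qq^*\propto\qq_0 e^{\eta\Delta_\theta}$, the $\max_x$ term is attained at every state so its subdifferential is $\{-\mu\}$ as $\mu$ ranges over all state distributions, and the last term contributes $(1-\gamma)\nu_0$; since the discounted next-state distribution $\gamma P\transpose\qq^*+(1-\gamma)\nu_0$ is itself a probability vector over $\X$, choosing $\mu$ equal to it yields $0\in\partial_V\dual(V_\theta,\theta)$. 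Substituting $V=V_\theta$ collapses the exponent to $\Delta_\theta=r+\gamma PV_\theta-Q_\theta$ and the $\max_x$ term to zero, producing precisely $\dual(\theta)=\frac1\eta\log\sum_{x,a}\qq_0 e^{\eta\Delta_\theta}+(1-\gamma)\iprod{\nu_0}{V_\theta}$; its convexity in $\theta$ (a monotone log-sum-exp composed with the convex soft-max $V_\theta$ and the linear $Q_\theta$) makes $\theta^*$ well defined, and reading off the maximizers at $\theta^*$ returns the stated $\qq^*$ and $\pi_{d^*}$, with uniqueness inherited from strict concavity of $\D{\qq}{\qq_0}$ and of the per-state policy terms.

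The step I expect to be the main obstacle is exactly this elimination of $V$: one must recognize that the Bellman-flow multiplier equals the log-sum-exp value function generated by the conditional-entropy regularizer on $d$, and that this identification rests on $\gamma P\transpose\qq^*+(1-\gamma)\nu_0$ being a genuine state distribution. A secondary nuisance is the asymmetry forced by the two different divergences---the unnormalized $\D{\qq}{\qq_0}$ versus the conditional $H(d\|d_0)$---which makes the $\qq$- and $d$-subproblems structurally different and requires care with the normalization of $d$, which is implied by the flow constraint rather than imposed directly.
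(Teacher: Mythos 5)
Your proof is correct, and while it shares the paper's overall skeleton (Lagrangian duality for the decomposed, regularized LP), it executes the two key steps by genuinely different means. The paper dualizes the normalization of $\qq$ with an extra multiplier $\rho$, takes stationarity conditions in $\qq$ and $d$, and pins down $V=V_\theta$ by requiring the resulting softmax policy to be normalized ($\sum_a \pi_{d}^*(a|x)=1$); it then needs a lengthy plug-back computation in which the entropy terms cancel to reveal $\mathcal{G}(\theta)$, and it leaves the state marginal of $d^*$ as an undetermined nonnegative function $\omega$. You instead keep the simplex constraints implicit, compute both inner maximizations in closed form via the Gibbs/Donsker--Varadhan variational formula (so the dual value appears immediately as a log-partition function, with no cancellation bookkeeping), decompose the $d$-problem into per-state policy maximization plus a maximization over the state marginal that yields the $\max_x\pa{V_\theta(x)-V(x)}$ term, and then eliminate $V$ by a convex-analytic first-order argument: $0\in\partial_V \dual(V_\theta,\theta)$ precisely because $\gamma P\transpose \qq^* + (1-\gamma)\nu_0$ is a probability vector over states. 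This buys a cleaner treatment of normalization and a nice structural insight---the optimality condition in $V$ is the dual reflection of the Bellman flow constraint---at the cost of slightly heavier machinery (subdifferentials of a max of affine functions) where the paper only needs elementary calculus. One small caveat: your Slater argument presumes a full-support occupancy measure exists, which implicitly requires $\nu_0$ (and the dynamics) to reach every state; the paper is no more careful on this point, as it simply asserts the minimax exchange, so this is a shared technicality rather than a gap in your argument.
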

The proof is based on Lagrangian duality and is presented in Appendix~\ref{app:QREPS-structure}.
This proposition has several important implications. First, it shows that the optimization problem~\eqref{eq:QREPS_OP} 
can be reduced to minimizing the convex loss function $\mathcal{G}$. By analogy with the classic logistic loss, we 
will call this loss function the \emph{logistic Bellman error}, its solutions $Q_\theta$ and $V_\theta$ the 
\emph{logistic value functions}. Unlike the squared Bellman error, the logistic Bellman error is convex in the 
action-value function $Q$ its parameters $\theta$. 
Another major implication of Proposition~\ref{prop:QREPS-structure} is that it provides a simple explicit expression for 
the policy associated with $d^*$ as a function of the logistic action-value function $Q_{\theta^*}$. This is remarkable 
since no such policy parametrization is directly imposed in the primal optimization problem~\eqref{eq:QREPS_OP} as a 
constraint, but it rather emerges naturally from the overall structure we propose.

Besides convexity, the LBE has other favorable properties: when regarded as a function of $Q$, its gradient satisfies 
$\onenorm{\nabla_Q \GG(Q)} \le 2$ and is thus $2$-Lipschitz with respect to the $\ell_\infty$ norm, and it is smooth 
with parameter $\alpha + \eta$ (due to being a composition of an $\alpha$-smooth and an $\eta$-smooth function). 
These additional properties make the LBE a desirable alternative to the squared Bellman error, which is non-convex, 
non-smooth, and has unbounded gradients. Indeed, the Lipschitzness of the LBE implies that optimizing the loss via 
stochastic gradient descent does not require any gradient clipping tricks since the derivatives are bounded by default. 
In this sense, the LBE can be seen as a theoretically well-motivated alternative to the Huber loss commonly used instead 
of the squared loss for policy evaluation.

\subsection{Approximate policy iteration with \QREPS}
We now derive a more concrete algorithmic framework based on the \QREPS optimization problem. Specifically, denoting 
the set of $(\qq,d)$ pairs that satisfy the constraints of the problem~\eqref{eq:QREPS_OP} as $\MM_\Phi$, we will 
consider a mirror-descent algorithm that calculates a sequence of distributions iteratively as
\[
 (\qq_{k+1},d_{k+1}) = \argmax_{(\qq,d)\in\MM_\Phi} \iprod{\qq}{r} - \frac{1}{\eta} D(\qq\|\qqd_k) - \frac{1}{\alpha} 
H(d\|d_k).
\]
Importantly, the reference distributions in both regularization terms are chosen to be $d_k$, and $d_0$ is chosen as 
the occupancy measure induced by a fixed initial policy $\pi_0$ with full support over all actions.
By the results established above, implementing the \QREPS updates requires finding the minimum $\theta_k^*$ of the 
logistic Bellman error function 
\[
 \mathcal{G}_k(\theta) = \frac1{\eta} \log \left(\sum_{x,a}  {\qqd_k(x,a) e^{\eta \Delta_\theta(x,a)}}\right) + 
(1-\gamma) \iprod{\nu_0}{V_\theta}.
\]
We will denote the logistic value functions corresponding to $\theta_k^*$ as $Q_k^*$ and $V_k^*$, and the induced 
policy as $\pi_k^*(a|x)$.
In practice, exact minimization can be often infeasible due to 
the lack of knowledge of the transition function $P$ and limited access to computation. Thus, practical 
implementations of \QREPS will inevitably have to work with approximate minimizers $\theta_k$ of the logistic Bellman 
error $\GG_k$. We will denote the corresponding logistic value functions as $Q_k$ and $V_k$ and the policy as $\pi_k$, 
and the distribution $d_k$ will be chosen as the occupancy measure induced by $\pi_k$. 
By analogy with classical approximate policy iteration (API) schemes, we will refer to the minimization of the 
LBE~$\GG_k$ as a \emph{policy evaluation} step that is carried out by the subroutine \QREPSEVAL. Using this language, 
we present a pseudocode for \QREPS as Algorithm~\ref{alg:QREPS}.
\begin{algorithm}[t]
\SetAlgoLined
Initialize $\pi_0$ arbitrarily\;
\For{$k=1,2,\dots,K$}{
    Policy evaluation: $\theta_k = \QREPSEVAL(\pi_k)$\;
    Policy update: $\pi_{k+1}(a|x) \propto \pi_k(a|x) e^{\alpha Q_k(x,a)}$\;
 }
 \KwResult{$\pi_K$}
 \caption{\QREPS}\label{alg:QREPS}
\end{algorithm}

\subsection{Policy evaluation via saddle-point optimization}
\label{sec:eval}
In order to use \QREPS in a reinforcement-learning setting, we need to design a policy-evaluation subroutine that is 
able to directly work with sample transitions obtained through interaction with the environment. We will specifically 
consider a scheme where in each epoch $k$, we execute policy $\pi_k$ and obtain a batch of 
$N$ sample transitions $\{\xi_{k,n}\}_{n=1}^N$ , with $\xi_{k,n}=(X_{k,n},A_{k,n},X_{k,n}')$, drawn from the occupancy  
measure $\dd_k$ induced by $\pi_k$. 
Furthermore, defining the \emph{empirical Bellman error}  for any $(x,a,x')$ as
\[
\wh{\Delta}_{\theta}(x,a,x') = r(x,a) + \gamma V_\theta(x') - Q_\theta(x,a),
\]
we define the \emph{empirical logistic Bellman error} (ELBE):
\begin{equation}\label{eq:ELBE}
\begin{split}
 \wh{\dual}_k(\theta) =& \frac1\eta \log \pa{\frac 1N \sum_{n=1}^N e^{\eta \wh{\Delta}_\theta(\xi_{k,n})}}
\\
&+(1-\gamma)\iprod{\nu_0}{\Vtet}.
 \end{split}
\end{equation}
As in the case of the \REPS objective function~\eqref{eq:REPS-dual} and the squared Bellman error~\eqref{eq:sqBellman}, 
the empirical counterpart of the LBE is a biased estimator of the true loss due to the conditional expectation taken 
over $x'$ within the exponent. As we will show in Section~\ref{sec:analysis}, this bias can be directly controlled by 
the magnitude of the regularization parameter $\eta$, and convergence to the optimal policy can be guaranteed for small 
enough choices of $\eta$ corresponding to strong regularization. 

We now provide a practical algorithmic framework for optimizing the ELBE~\eqref{eq:ELBE} based on the following 
reparameterization of the loss function:
\begin{proposition}\label{prop:duality_opt}
Let $\DD_N$ be the set of all probability distributions over $[N]$ and define
\begin{align*}
\mathcal{S}_k(\theta,\ymu) =& \sum_{n} \ymu(n)\pa{\wh{\Delta}_\theta(\xi_{k,n})- \frac{1}{\eta} 
\log (N\ymu(n))} \\ 
&+(1-\gamma)\iprod{\nu_0}{\Vtet}
 \label{sp}
\end{align*}
for each $\ymu\in\DD_N$. Then, 
the problem of minimizing the ELBE can be rewritten as $\min_{\theta} \hat{\dual}_k(\theta) = \min_{\theta} 
\max_{\ymu\in\DD_N} \mathcal{S}_k(\theta,\ymu)$.
\end{proposition}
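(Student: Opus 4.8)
The plan is to prove the claimed identity \emph{pointwise in $\theta$}: I will show that for every fixed $\theta$ the inner maximization over $\ymu$ reproduces the log-mean-exp term of the ELBE, i.e.
\[
 \max_{\ymu\in\DD_N} \SS_k(\theta,\ymu) = \wh{\dual}_k(\theta).
\]
Because this equality holds separately for each $\theta$, minimizing both sides over $\theta$ immediately gives $\min_\theta \wh{\dual}_k(\theta) = \min_\theta \max_{\ymu\in\DD_N}\SS_k(\theta,\ymu)$. In particular \emph{no} interchange of $\min$ and $\max$ (Sion's theorem or similar) is needed, which sidesteps the only point one might a priori worry about.

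First I would note that the term $(1-\gamma)\iprod{\nu_0}{\Vtet}$ does not depend on $\ymu$ and appears identically in $\SS_k$ and $\wh{\dual}_k$, so it factors out of the maximization. It then suffices to establish the scalar identity
\[
 \max_{\ymu\in\DD_N}\sum_n \ymu(n)\pa{\wh{\Delta}_\theta(\xi_{k,n}) - \tfrac{1}{\eta}\log(N\ymu(n))} = \tfrac{1}{\eta}\log\pa{\tfrac1N\sum_n e^{\eta \wh{\Delta}_\theta(\xi_{k,n})}}.
\]
This is exactly the Gibbs variational (Donsker--Varadhan) representation of the log-partition function, once one observes that $\sum_n \ymu(n)\log(N\ymu(n)) = \D{\ymu}{u}$ is the relative entropy of $\ymu$ to the uniform distribution $u$ on $[N]$.

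To prove the scalar identity I would solve the inner problem directly. The objective is strictly concave in $\ymu$, since $p\mapsto -\tfrac1\eta\, p\log(Np)$ has second derivative $-\tfrac{1}{\eta p}<0$ for $\eta>0$, and the simplex $\DD_N$ is compact and convex, so a unique maximizer exists. Forming the Lagrangian for the single constraint $\sum_n \ymu(n)=1$ and setting the derivative in $\ymu(n)$ to zero yields $\wh{\Delta}_\theta(\xi_{k,n}) - \tfrac1\eta\bpa{\log(N\ymu(n))+1}=\lambda$, hence $\ymu^*(n)\propto e^{\eta \wh{\Delta}_\theta(\xi_{k,n})}$ with normalization $\ymu^*(n) = e^{\eta \wh{\Delta}_\theta(\xi_{k,n})}/\sum_m e^{\eta \wh{\Delta}_\theta(\xi_{k,m})}$. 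As $\ymu^*(n)>0$ for all $n$, the nonnegativity constraints are inactive and $\ymu^*$ is the interior global maximizer. Substituting $\ymu^*$ back, the linear term $\sum_n \ymu^*(n)\wh{\Delta}_\theta(\xi_{k,n})$ cancels against the matching piece of $-\tfrac1\eta\sum_n \ymu^*(n)\log(N\ymu^*(n))$, leaving precisely $\tfrac1\eta\log\pa{\tfrac1N\sum_n e^{\eta \wh{\Delta}_\theta(\xi_{k,n})}}$, as required.

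The computation is routine convex duality, so I do not expect a genuine obstacle. The only points that need care are bookkeeping: (i) checking that the objective extends continuously to the boundary of the simplex, which holds because $p\log(Np)\to 0$ as $p\to 0^+$, so the maximum is attained and the boundary causes no trouble; and (ii) being explicit that the equality is obtained for each fixed $\theta$, so that taking $\min_\theta$ on both sides is legitimate without any minimax argument.
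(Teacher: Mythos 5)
Your proof is correct and takes essentially the same route as the paper: the paper's entire proof is a citation of the Donsker--Varadhan variational formula (Corollary 4.15 of Boucheron--Lugosi--Massart), which is precisely the scalar identity you verify, after correctly factoring out the $\ymu$-independent term $(1-\gamma)\iprod{\nu_0}{V_\theta}$ and noting that no min--max interchange is needed. The only difference is that you prove the cited formula from scratch in the finite case via the Lagrangian computation (with the correct maximizer $\ymu^*(n) \propto e^{\eta \wh{\Delta}_\theta(\xi_{k,n})}$ and the boundary check), rather than invoking it, which is a sound and self-contained substitute.
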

The proof is a straightforward consequence of the Donsker--Varadhan variational formula (see, e.g., \citealp{BoLuMa13}, 
Corollary 4.15). 
Motivated by the characterization above, we propose to formulate the optimization of the ELBE as a two-player game 
between a \emph{sampler} and a \emph{learner}: in each round $\tau=1,2,\dots,T$, the sampler proposes a distribution 
$z_{k,\tau} \in \DD_N$ over sample transtions and the learner updates the parameters $\theta_{k,\tau}$, together 
attempting to approximate the saddle point of $\SS_k$. In particular, the learner will update the parameters $\theta$ 
through online stochastic gradient descent on the sequence of loss functions $\ell_\tau = \SS_{k}(\cdot,z_{k,\tau})$. In 
order to estimate the gradients, we define the policy $\pi_{k,\theta}(a|x) = \pi_{k}(a|x)e^{\alpha\pa{Q_\theta(x,a) - 
V_\theta(x)}}$ and propose the following procedure: sample an index $I$ from the distribution $z_{k,\tau}$ and let
$(X,A,X')=(X_{k,I},A_{k,I},X'_{k,I})$ and sample a state $\overline{X} \sim \nu_0$ and two actions $A' \sim 
\pi_\theta(\cdot|X')$ and $\overline{A} \sim \pi_\theta(\cdot|\overline{X})$, then let
\begin{equation}\label{eq:gradest}
 \wh{g}_{k,t}(\theta) = \gamma \varphi(X',A') - \varphi(X,A) + (1-\gamma) \varphi(\overline{X},\overline{A}).
\end{equation}
This choice is justified by the following proposition:
\begin{proposition}\label{prop:lossgrad}
The vector $\wh{g}_{k,t}(\theta)$ is an unbiased estimate of the gradient $\nabla_\theta 
\SS_{k}(\theta_{k,\tau},z_{k,\tau})$.
\end{proposition}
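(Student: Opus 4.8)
The plan is to compute the exact gradient $\nabla_\theta \SS_k(\theta,z_{k,\tau})$ in closed form and then check that the sampling scheme reproduces it term by term in expectation. First I would differentiate the definition of $\SS_k$ directly in $\theta$. Since the entropy contribution $-\frac1\eta \log(N z_{k,\tau}(n))$ has no dependence on $\theta$, it drops out of the gradient, leaving
\[
\nabla_\theta \SS_k(\theta,z_{k,\tau}) = \sum_n z_{k,\tau}(n)\, \nabla_\theta \wh{\Delta}_\theta(\xi_{k,n}) + (1-\gamma)\sum_x \nu_0(x)\, \nabla_\theta V_\theta(x).
\]
The only nonroutine piece here is $\nabla_\theta V_\theta$, because $\nabla_\theta Q_\theta(x,a)=\varphi(x,a)$ is immediate from $Q_\theta=\Phi\theta$.

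The key step is a log-sum-exp (softmax) differentiation of the value function. Using $V_\theta(x) = \frac1\alpha \log\pa{\sum_a \pi_k(a|x)\, e^{\alpha Q_\theta(x,a)}}$ together with the chain rule, I would obtain
\[
\nabla_\theta V_\theta(x) = \frac{\sum_a \pi_k(a|x)\, e^{\alpha Q_\theta(x,a)}\, \varphi(x,a)}{\sum_a \pi_k(a|x)\, e^{\alpha Q_\theta(x,a)}} = \sum_a \pi_\theta(a|x)\,\varphi(x,a),
\]
where the final equality uses the identity $\pi_\theta(a|x) = \pi_k(a|x)\, e^{\alpha(Q_\theta(x,a)-V_\theta(x))}$ supplied by Proposition~\ref{prop:QREPS-structure}. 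In words, $\nabla_\theta V_\theta(x)$ is precisely the feature expectation $\EEc{\varphi(x,A')}{A'\sim\pi_\theta(\cdot|x)}$. Substituting this into $\nabla_\theta \wh{\Delta}_\theta(x,a,x') = \gamma\, \nabla_\theta V_\theta(x') - \varphi(x,a)$, and then into the previous display, rewrites the true gradient as a sum of three policy-expectation terms.

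Finally I would take the expectation of $\wh{g}_{k,t}(\theta)$ over all of its sampling randomness and match it against this expression. Conditioning on $X'$, the term $\gamma\varphi(X',A')$ averages to $\gamma\sum_{a'}\pi_\theta(a'|X')\varphi(X',a')$; averaging $\varphi(\overline{X},\overline{A})$ over $\overline{A}\sim\pi_\theta(\cdot|\overline{X})$ and $\overline{X}\sim\nu_0$ gives $\sum_x\nu_0(x)\sum_a\pi_\theta(a|x)\varphi(x,a)$; and the term $-\varphi(X,A)$ is deterministic given the index. Taking the outer expectation over $I\sim z_{k,\tau}$ then reconstructs the two $\sum_n z_{k,\tau}(n)(\cdots)$ pieces, while the $(1-\gamma)$-weighted term matches the initial-state contribution exactly. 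Thus the three sampled terms of $\wh{g}_{k,t}$ align one-to-one with the three pieces of $\nabla_\theta \SS_k$, establishing unbiasedness.

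I do not expect a serious obstacle here: the argument is essentially bookkeeping once the softmax derivative is in hand. The two points that require care are confirming that the log-partition/entropy term genuinely contributes nothing to the $\theta$-gradient, and verifying the conditional independence of the draws — that $\overline{X},\overline{A}$ are sampled independently of $I$ and that $A'$ is drawn from $\pi_\theta(\cdot|X')$ given $X'$ — so that the nested expectations factor exactly as claimed.
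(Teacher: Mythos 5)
Your proof is correct and follows essentially the same route as the paper's: differentiate $\SS_k$ in $\theta$, identify $\nabla_\theta V_\theta(x)$ via the log-sum-exp derivative as the feature expectation under the softmax policy $\pi_{k,\theta}$, and match the resulting three terms against the conditional expectations of the sampled quantities in $\wh{g}_{k,t}(\theta)$. The only difference is cosmetic (you work in vector form where the paper works coordinate-wise, and you spell out the final expectation-matching step that the paper leaves as "directly verified"), so there is nothing to correct.
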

The proof is provided in Appendix~\ref{app:lossgrad}. Using this 
gradient estimator, the learner updates $\theta_{k,\tau}$ as 
\[
 \theta_{k,\tau+1} = \theta_{k,\tau} - \beta \wh{g}_{k,t}(\theta_{k,\tau}),
\]
where $\beta>0$ is a stepsize parameter.
As for the sampler, one can consider several different algorithms for updating the distributions
$z_{k,\tau}$. A straightforward choice is simply using the best-response strategy of playing
\[
 z_{k,\tau}(n) \propto \exp\pa{\eta \wh{\Delta}_{\theta_{k,\tau}}(\xi_{k,n})},
\]
whence the overall algorithm becomes equivalent to optimizing the  empirical LBE via stochastic gradient descent. 
A slightly more sophisticated (and sometimes empirically more stable) approach is updating the parameters incrementally 
by first calculating the gradient $h_{k,\tau} = \nabla_z \SS_k(\theta_{k,\tau},z_{k,\tau})$ with components
\[
  h_{k,\tau}(n)= \wh{\Delta}_\theta(\xi_{k,n}) -\frac 1\eta \log\pa{N\ymu_{k,\tau}(n)},
\]
and then updating $z_{k,\tau}$ through an exponentiated gradient step with a stepsize $\beta'$:
\[
 z_{k,\tau+1}(n) \propto z_{k,\tau}(n) e^{\beta' h_{k,\tau}(n)}.
\]
We refer to the implementation of \QREPS using the above procedure as \OPT and provide pseudocode 
as Algorithm~\ref{alg:saddleQREPS}. We discuss the impact of the design choices involved in choosing the sampler's 
updates in Section~\ref{sec:conc}.

\begin{algorithm}[t]
\SetAlgoLined
Initialize $\pi_0$ arbitrarily\;
\For{$k=0,1,2,\dots,K-1$}{
    Run $\pi_k$ and collect sample transitions $\{\xi_{k,n}\}_{n=1}^N$\;
    Saddle-point optimization for \QREPSEVAL:\\
    \For{$\tau=1,2,\dots,T$}{
        $\theta_{k,\tau}\leftarrow \theta_{k,\tau-1}-\beta \wh{g}_{k,\tau-1}(\theta)$\;
        $z_{k,\tau}(n) \leftarrow \frac{z_{k,\tau-1}(n) \exp\pa{\beta' h_{k,\tau-1}(n)}}{\sum_m z_{k,\tau-1}(m) 
\exp\pa{\beta' h_{k,\tau-1}(m)}}$\;
    }
    $\theta_k = \frac1T \sum_{\tau=0}^{T}\theta_{k,\tau}$\;
    Policy update: $\pi_{k+1}(a|x)\propto \pi_0(a|x) e^{\alpha\sum_{i=0}^k Q_{\theta_i}(x,a)}$\;
 }
 \KwResult{$\pi_I$ with $I\sim \mbox{Unif}(K)$}
 \caption{\OPT}\label{alg:saddleQREPS}
\end{algorithm}

\section{ANALYSIS}\label{sec:analysis}
This section presents a collection of formal guarantees regarding the performance of \QREPS. For most of the analysis, 
we will make the following assumptions:
\begin{assumption}[Concentrability\footnote{Sometimes also called 
``concentratability''.}]\label{ass:conc}
 The likelihood ratio for any two valid occupancy measures $\qq$ and $\qq'$ is upper-bounded by some 
$C_\gamma$ called the \emph{concentrability coefficient}: $\sup_{x} \frac{\sum_a \qq(x,a)}{\sum_a \qq'(x,a)} \le 
C_\gamma$.
\end{assumption}
\begin{assumption}[Factored linear MDP]\label{ass:linMDP}
 There exists a function $\omega:\X\ra \real^m$ and a vector $\vartheta\in\real^m$ such that for any $x,a,x'$, the 
transition 
function factorizes as $P(x'|x,a) = \iprod{\omega(x')}{\varphi(x,a)}$ and the reward function can be expressed as 
$r(x,a) = \iprod{\vartheta}{\varphi(x,a)}$. 
\end{assumption}
The first of these ensures that every policy will explore the state space sufficiently well. Although this is a rather 
strong condition that is rarely verified in problems of practical interest, it is commonly assumed to ease theoretical 
analysis of batch RL algorithms. For instance, similar conditions are required in the classic works of 
\citet{kakade2002approximately}, \citet{szepes06learning}, and more recently by \citet{GPP17}, 
\citet{AKLM20} and \citet{XJ20}. The second assumption ensures that the feature space is expressive enough to allow the 
representation of the optimal action-value function and thus the optimal policy (a property often called 
\emph{realizability}). This condition has been first proposed by \citet{YW19} and \citet{JYWJ20}, and has quickly 
become a standard model for studying reinforcement learning algorithms under realizable linear function approximation 
\citep{CYJW20,WDYS20,NPB20,AKKS20}.

\paragraph{Error propagation of \QREPS.}
We first provide guarantees regarding the propagation of optimization errors in the general \QREPS algorithm template. 
Specifically, we will study how the suboptimality of each policy evaluation step impacts the convergence rate of the 
sequence of policies to the optimal policy in terms of the corresponding expected rewards. To this end, we let 
$\theta_k^* = \argmin_\theta \GG_k(\theta)$, and define the suboptimality gap associated with the parameter vector 
$\theta_k$ computed by \QREPSEVAL as $\varepsilon_k = \GG_k(\theta_k) - \GG_k(\theta_k^*)$. Denoting the 
normalized discounted return associated with policy $\pi_k$ as $R_k = \iprod{d}{r}$ and the optimal return $R^* = 
\iprod{d^*}{r}$, our main result is stated as follows:
\begin{theorem}\label{thm:propagation}
Suppose that Assumptions~\ref{ass:conc} and~\ref{ass:linMDP} hold and let $d^* = \argmax_{d\in\MM^*} \iprod{d}{r}$. 
Then, the policy sequence output by \QREPS satisfies
 \begin{align*}
 \sum_{k=1}^K \pa{R^* - R_k} \le& \frac{D(\qqd^*\|\qqd_{0})}{\eta} + \frac{H(d^*\|d_0)}{\alpha} + 
\sum_{k=1}^K 
\varepsilon_k
 \\
 &\,\,\,+ 3C_\gamma \pa{\sqrt{\frac{\alpha}{{1-\gamma}}} + 
\sqrt{\eta}}\sum_{k=1}^K \sqrt{\varepsilon_k}.
\end{align*}
\end{theorem}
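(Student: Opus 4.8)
The plan is to treat \QREPS as a mirror-descent scheme on the regularized, feature-relaxed program~\eqref{eq:QREPS_OP} and to bound the cumulative suboptimality $\sum_{k}\pa{R^* - R_k}$ by telescoping Bregman divergences, charging an extra price for each inexact policy-evaluation step. The backbone is a return identity: because $\qq^*$ is a valid occupancy measure it satisfies the flow constraint, so for the logistic value functions at $\theta_k$ one gets $R^* = \iprod{\qq^*}{\Delta_{\theta_k}} + (1-\gamma)\iprod{\nu_0}{V_{\theta_k}} + \tfrac1\alpha\sum_x\qq^*(x)\pa{\D{\pi^*(\cdot|x)}{\pi_k(\cdot|x)} - \D{\pi^*(\cdot|x)}{\pi_{k+1}(\cdot|x)}}$, using $Q_{\theta_k}(x,a)-V_{\theta_k}(x)=\tfrac1\alpha\log\tfrac{\pi_{k+1}(a|x)}{\pi_k(a|x)}$. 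Next I would invoke the log-partition identity $\iprod{\qq^*}{\Delta_{\theta_k}} = \tfrac1\eta\log\pa{\sum_{x,a}\qq_k(x,a)e^{\eta\Delta_{\theta_k}(x,a)}} + \tfrac1\eta\pa{\D{\qq^*}{\qq_k}-\D{\qq^*}{\qq_{k+1}}}$, which holds \emph{exactly} since the update sets $\qq_{k+1}\propto\qq_k e^{\eta\Delta_{\theta_k}}$. Substituting and recognizing the first two terms as $\GG_k(\theta_k)$ yields
\begin{align*}
R^* = \GG_k(\theta_k) &+ \tfrac1\eta\pa{\D{\qq^*}{\qq_k} - \D{\qq^*}{\qq_{k+1}}} \\
&+ \tfrac1\alpha\sum_x \qq^*(x)\pa{\D{\pi^*(\cdot|x)}{\pi_k(\cdot|x)} - \D{\pi^*(\cdot|x)}{\pi_{k+1}(\cdot|x)}}.
\end{align*}
Summing over $k$, both Bregman terms telescope (each step's reference is the actual next iterate), and dropping the nonnegative trailing divergences gives the clean bound $\sum_k\pa{R^* - \GG_k(\theta_k)} \le \tfrac1\eta\D{\qq^*}{\qq_0} + \tfrac1\alpha H(d^*\|d_0)$. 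Assumption~\ref{ass:linMDP} enters here to make the feature-relaxed constraints coincide with the exact flow constraints, so the iterates are genuine occupancy measures and $d^*=\qq^*$, which is what turns the $\qq^*$-weighted policy divergences into $H(d^*\|d_0)$.

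It then remains to relate the surrogate values $\GG_k(\theta_k)$ to the realized returns $R_k$, i.e., to bound $\sum_k\pa{\GG_k(\theta_k)-R_k}$ by $\sum_k\varepsilon_k + \pa{\tfrac{C_\gamma+1}{1-\gamma}\sqrt\alpha + C_\gamma\sqrt\eta}\sum_k\sqrt{2\varepsilon_k}$. I would first split off the optimization gap by writing $\GG_k(\theta_k)=\GG_k(\theta_k^*)+\varepsilon_k$, so that the $\varepsilon_k$ accumulate into the stated linear term; the residual $\GG_k(\theta_k^*)-R_k$ compares the \emph{idealized} mirror-descent value at the exact minimizer against the return of the deployed policy $\pi_k$. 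This residual is where the $\sqrt{\varepsilon_k}$ contributions originate.

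The crux is controlling this residual through the geometry of $\GG_k$. Since $\GG_k$ is built from an $\tfrac1\eta$-scaled log-partition over $\qq_k$ and, through $V_\theta$, an $\tfrac1\alpha$-scaled log-partition over actions, its Bregman divergence—which equals $\varepsilon_k$ because $\nabla\GG_k(\theta_k^*)=0$—lower-bounds $\tfrac1\eta\D{\qq_{k+1}}{\qq_{k+1}^*}$ and a $\qq^*$-weighted $\tfrac1\alpha\D{\pi_{k+1}}{\pi_{k+1}^*}$ between the realized and idealized updates. Pinsker's inequality then gives $\onenorm{\qq_{k+1}-\qq_{k+1}^*}\le\sqrt{2\eta\varepsilon_k}$ and $\onenorm{\pi_{k+1}(\cdot|x)-\pi_{k+1}^*(\cdot|x)}\le\sqrt{2\alpha\varepsilon_k}$, which is the source of the $\sqrt\eta$ and $\sqrt\alpha$ factors. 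Pairing these $\ell_1$ deviations with the quantities they multiply—the reward for the occupancy discrepancy and the logistic $Q$-values, whose range is $\tfrac1{1-\gamma}$, for the policy discrepancy—and converting expectations from the sampling measure $\qq_k$ to the comparator $\qq^*$ via the concentrability coefficient of Assumption~\ref{ass:conc} produces exactly the coefficients $C_\gamma\sqrt\eta$ and $\tfrac{C_\gamma+1}{1-\gamma}\sqrt\alpha$ multiplying $\sqrt{2\varepsilon_k}$.

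The main obstacle I anticipate is precisely this last step: cleanly relating $\GG_k(\theta_k)$ to the deployed return $R_k$ and performing the change of measure so that the concentrability coefficient lands on the intended terms with the correct $C_\gamma$ versus $C_\gamma+1$ bookkeeping and the horizon factor $\tfrac1{1-\gamma}$ enters only through the value-function range rather than compounding. A technical sub-point is showing that the single gap $\varepsilon_k$ simultaneously controls \emph{both} the occupancy-measure and the policy KL divergences with the right $\tfrac1\eta$ and $\tfrac1\alpha$ weights, which requires disentangling the two coupled log-partitions inside $\GG_k$ without losing constants; and, as noted above, the whole decomposition rests on Assumption~\ref{ass:linMDP} guaranteeing that the relaxed iterates correspond to true occupancy measures.
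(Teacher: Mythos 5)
Your skeleton---mirror descent on the regularized LP, telescoping divergences, Pinsker plus concentrability for the error terms---matches the paper's, and your return identity and log-partition identity are both correct. But there is a genuine gap exactly at the step your clean bound relies on: the claim that ``both Bregman terms telescope (each step's reference is the actual next iterate)'' is false for \QREPS with inexact evaluation. Your identity produces the term $-\frac1\eta\D{\qq^*}{\tq_{k+1}}$ with $\tq_{k+1}\propto \qq_k e^{\eta\Delta_{\theta_k}}$, whereas the reference measure of the next round's loss $\GG_{k+1}$ is the occupancy measure $\qq_{k+1}$ actually induced by deploying $\pi_{k+1}$. These two measures coincide only when $\theta_k$ is the exact minimizer; otherwise $\tq_{k+1}$ is not even a valid occupancy measure (the paper flags this as the central difficulty of the whole proof). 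Your sum therefore leaves an unaccounted residual $\sum_k \frac1\eta\pa{\D{\qq^*}{\qq_{k+1}} - \D{\qq^*}{\tq_{k+1}}}$. This residual is not a lower-order nuisance: in the paper it is the sole source of the $C_\gamma\sqrt{\eta}\,\sqrt{2\varepsilon_k}$ term and of the $C_\gamma$ part of the $\frac{C_\gamma+1}{1-\gamma}\sqrt{\alpha}$ coefficient, and controlling it occupies most of the argument: an exact identity $\varepsilon_k = \frac{1}{\eta}\D{\qq_{k+1}^*}{\tq_{k+1}} + \frac1\alpha H(\dd_{k+1}^*\|\dd_{k+1})$ (Lemma~\ref{lem:projection}), a flow-perturbation bound $\onenorm{\qq_{k+1}-\dd_{k+1}^*}\le\frac{1}{1-\gamma}\sqrt{\alpha\varepsilon_k/2}$ (Lemma~\ref{lem:qddist}), and then a Bregman-projection / generalized-Pythagorean argument combined with the chain rule of relative entropy, Pinsker, and Assumption~\ref{ass:conc} to compare $\D{\qq^*}{\qq_{k+1}}$ against $\D{\qq^*}{\tq_{k+1}}$.

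Your fallback---extracting the $\sqrt{\varepsilon_k}$ terms from $\sum_k\pa{\GG_k(\theta_k)-R_k}$---does not close this gap. By the dual identity~\eqref{eq:Gdual}, $\GG_k(\theta_k^*)=\iprod{\qq_{k+1}^*}{r}-\frac1\eta\D{\qq_{k+1}^*}{\qq_k}-\frac1\alpha H(\dd_{k+1}^*\|\dd_k)$, so the residual $\GG_k(\theta_k^*)-R_k$ is the per-step \emph{gain} of the ideal mirror-descent update over the current policy: it is nonnegative (take the feasible point $(\qq_k,\qq_k)$) and generically of order one, not $O(\varepsilon_k+\sqrt{\varepsilon_k})$. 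The quantity that \emph{is} controlled by $\varepsilon_k$ is the discrepancy between the ideal update $\dd_{k+1}^*$ and the realized next occupancy $\qq_{k+1}$---not the discrepancy between $\GG_k(\theta_k)$ and the current return. Two further mechanisms you invoke are off. First, Pinsker applied to the divergences that $\varepsilon_k$ controls yields $\onenorm{\qq_{k+1}^*-\tq_{k+1}}\le\sqrt{2\eta\varepsilon_k}$ and a $\nu_{k+1}^*$-weighted bound on the policy differences; it says nothing directly about the realized occupancy $\qq_{k+1}$, and passing from policy-space bounds to realized-occupancy bounds is precisely what requires the flow equations. Second, the $\frac{1}{1-\gamma}$ factor does not come from the range of the logistic $Q$-values, but from propagating per-state policy perturbations through the discounted state-occupancy recursion, $\onenorm{\nu_{k+1}-\nu_{k+1}^*}\le\frac{\gamma}{1-\gamma}\sum_x\nu_{k+1}^*(x)\sum_a\left|\pi_{k+1}^*(a|x)-\pi_{k+1}(a|x)\right|$, which is the content of Lemma~\ref{lem:qddist} and also requires Proposition~\ref{prop:real} (realizability) to guarantee that $\dd_{k+1}^*$ is a genuine occupancy measure. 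As written, your argument establishes the theorem only in the exact-evaluation case $\varepsilon_k=0$.
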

The proof can be found in Appendix~\ref{app:mainproof}.
The theorem implies that whenever the bound increases sublinearly, the average quality of the policies output by \QREPS 
approaches that of the optimal policy: $\lim_{K\ra\infty} \frac{1}{K} \sum_{k=1}^K R_k= R^*$. 
An immediate observation is that the policy updates are perfectly accurate
(i.e., $\varepsilon_k=0$ for all $k$), then the expected return is guaranteed to converge to the optimum at a rate of 
$1/K$, as expected for mirror-descent algorithms optimizing a fixed linear loss, and also matching the best known rates 
for natural policy gradient methods \citep{AKLM20}.
In the more interesting case where the evaluation steps are not perfect, the correct choice of the regularization 
parameters depends on the magnitude of the evaluation errors. Theorem~\ref{thm:conc} below provides  bounds on these 
errors when using the minimizer of the empirical LBE for policy evaluation.

\begin{figure*}[t]
\includegraphics[width=\textwidth]{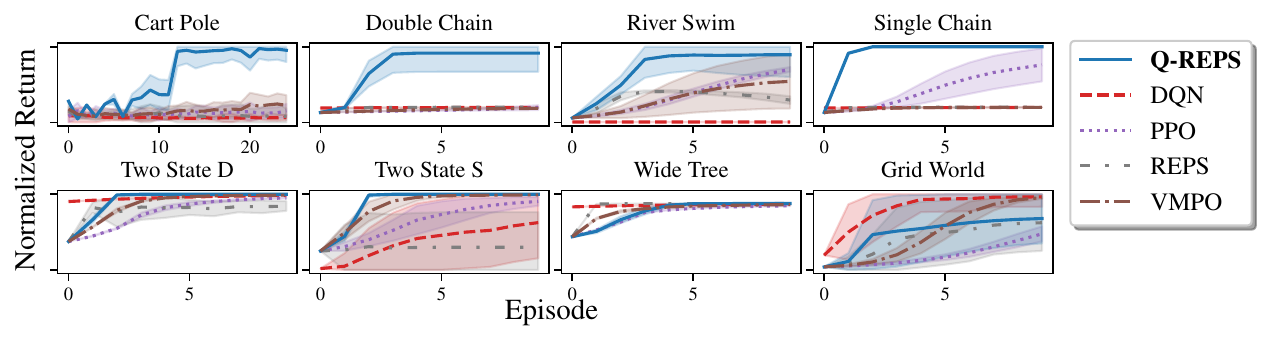}
\caption{Empirical performance \QREPS on different benchmarks. The returns are scaled to $[0, 1]$ by dividing by the 
maximum achievable return, with the mean plotted in solid lines and and the shaded area representing one standard 
deviation.}
\label{fig:environments}
\end{figure*}

One important feature of the bound of Theorem~\ref{thm:propagation} is that it shows no direct dependence on the size 
of the MDP or the dimensionality of the feature map, which can be seen to justify using \QREPS with
general (possibly non-linear) function approximation. Indeed, observe that every MDP can be seen to satisfy 
Assumption~\ref{ass:linMDP} when choosing $\Phi$ as the identity map, and that the logistic Bellman error can be 
directly written as a function of the Q-functions. Then, Theorem~\ref{thm:propagation} shows that whenever one is able 
to keep the policy evaluation errors small, convergence to the optimal policy can be guaranteed irrespective of the size 
of the state space. 
Among other implications, this suggests that the logistic Bellman error can indeed be a viable objective function for 
large-scale deep reinforcement learning. 

\paragraph{Concentration of the empirical LBE.}
We now move on to establishing some important properties of the empirical logistic Bellman error~\eqref{eq:ELBE}. 
For simplicity, we will assume that the sample transitions are generated in an i.i.d.~fashion: each 
$(X_{k,n},A_{k,n})$ is drawn independently from $\qq_k$ and $X_{k,n}'$ is drawn independently from 
$P(\cdot|X_{k,n},A_{k,n})$. Under this condition, the following theorem establishes the connection between the ELBE and 
the true LBE:
\begin{theorem}\label{thm:conc}
Let $\QQ = \ev{Q_\theta: \infnorm{Q_\theta} \le B'}$ for some $B'>0$ and $\Theta$ be the corresponding set of parameter 
vectors. Furthermore, define $B = 1+(1+\gamma)B'$, and assume that $\eta B \le 1$ holds. Then, with probability at 
least $1-\delta$, the following holds:
\[
 \sup_{\theta\in\Theta} \left|\wh\GG_k(\theta) - \GG_k(\theta) \right| \le 8 \eta B^2 + 56 
\sqrt{\frac{m\log\bpa{(1+4BN)/\delta}}{N}}.
\]
\end{theorem}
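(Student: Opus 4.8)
The plan is to first observe that the terms $(1-\gamma)\iprod{\nu_0}{V_\theta}$ appearing in both $\wh\GG_k$ and $\GG_k$ are identical and cancel, so the whole problem reduces to controlling $\frac1\eta\bpa{\log \wh Z_\theta - \log Z_\theta}$ uniformly over $\theta\in\Theta$, where $\wh Z_\theta = \frac1N\sum_{n} e^{\eta\widehat{\Delta}_\theta(\xi_{k,n})}$ and $Z_\theta = \sum_{x,a}\qq_k(x,a)e^{\eta\Delta_\theta(x,a)}$. Before anything else I would record the a priori bounds that keep the exponents small: since $V_\theta(x)$ is a softmax of $Q_\theta(x,\cdot)$ it lies between $\min_a Q_\theta(x,a)$ and $\max_a Q_\theta(x,a)$, so $\infnorm{V_\theta}\le B'$; combined with $\infnorm{Q_\theta}\le B'$ and $r\in[0,1]$ this yields $\max\{|\widehat{\Delta}_\theta|,|\Delta_\theta|\}\le B$. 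Under the hypothesis $\eta B\le 1$ all exponents lie in $[-1,1]$, so $e^{\eta\widehat{\Delta}_\theta}$, $Z_\theta$ and $\mathbb{E}[\wh Z_\theta]$ all lie in $[e^{-1},e]$. I would then split the target into a bias term $\frac1\eta(\log\mathbb{E}[\wh Z_\theta]-\log Z_\theta)$ and a statistical term $\frac1\eta(\log \wh Z_\theta - \log\mathbb{E}[\wh Z_\theta])$.

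For the bias I would factor the common $e^{\eta(r-Q_\theta)}$ out of each summand, so that the only discrepancy is between $\mathbb{E}_{x'}[e^{\eta\gamma V_\theta(x')}]$ and $e^{\eta\gamma \mathbb{E}_{x'}[V_\theta(x')]}$ — precisely the Jensen gap created by the conditional expectation inside the exponent. Writing $U = \eta\gamma(V_\theta(x')-\mathbb{E}_{x'} V_\theta)$, which satisfies $|U|\le 2\eta\gamma B'\le \eta B\le 1$, the elementary inequality $e^U\le 1+U+U^2$ (valid on $[-1,1]$) together with $\mathbb{E} U=0$ gives $\mathbb{E}_{x'}[e^U]\le 1+\mathbb{E}_{x'} U^2\le 1+4\eta^2 B^2$, hence $\mathbb{E}[\wh Z_\theta]\le (1+4\eta^2 B^2)Z_\theta$. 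Using $\log(1+t)\le t$ this bounds the bias by $\frac1\eta\log(1+4\eta^2 B^2)\le 4\eta B^2$, comfortably inside the $8\eta B^2$ budget, while Jensen's inequality gives the matching lower bound of $0$. This is exactly the step where the assumption $\eta B\le 1$ is essential.

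For the statistical term the key observation — and the reason the final bound carries no $1/\eta$ factor — is that the random summands $e^{\eta\widehat{\Delta}_\theta(\xi_{k,n})}$ take values in an interval of width only $O(\eta B)$, since $e^{(\cdot)}$ is $e$-Lipschitz on $[-1,1]$ and $\eta\widehat{\Delta}_\theta$ ranges over $[-\eta B,\eta B]$. Thus for a fixed $\theta$, Hoeffding's inequality controls $|\wh Z_\theta - \mathbb{E}[\wh Z_\theta]|$ by $O(\eta B\sqrt{\log(1/\delta)/N})$, and after applying the $e$-Lipschitzness of $\log$ on $[e^{-1},e]$ and multiplying by $\frac1\eta$ the factors of $\eta$ cancel, leaving $O(B\sqrt{\log(1/\delta)/N})$. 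To make this uniform over $\Theta$ I would pass to an $\epsilon$-net of $\QQ$ in the sup-norm: the map $\theta\mapsto e^{\eta\widehat{\Delta}_\theta}$ is Lipschitz because the softmax $V_\theta$ is $1$-Lipschitz in $Q_\theta(\cdot)$, giving $|\widehat{\Delta}_\theta-\widehat{\Delta}_{\theta'}|\le (1+\gamma)\infnorm{Q_\theta-Q_{\theta'}}$, so the net error is controlled on both $\wh Z_\theta$ and $\mathbb{E}[\wh Z_\theta]$. A net of resolution $\epsilon\sim 1/N$ has log-cardinality $\lesssim m\log(1+4BN)$, and a union bound over it turns $\log(1/\delta)$ into $m\log((1+4BN)/\delta)$, producing the stated $56\sqrt{m\log((1+4BN)/\delta)/N}$.

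Combining the two estimates gives the theorem. I expect the main obstacle to be the statistical term rather than the bias: one must resist applying concentration naively to $\frac1\eta\log(\cdot)$ — which would spuriously blow up as $\eta\to 0$ — and instead exploit that the $\eta$ in the range of the summands exactly cancels the $\frac1\eta$ outside the logarithm. The remaining work — constructing the covering, verifying its Lipschitz transfer, and tuning $\epsilon$ so that the discretization error is $O(1/N)$ and the logarithmic factor comes out as $\log(1+4BN)$ — is standard empirical-process bookkeeping, and is where the precise constants (such as the $56$) are pinned down.
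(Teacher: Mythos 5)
Your proposal follows the same route as the paper's proof: the linear terms $(1-\gamma)\iprod{\nu_0}{V_\theta}$ cancel, the bias of $\wh\GG_k$ is isolated as the Jensen gap created by the conditional expectation inside the exponent and bounded via $e^u\le 1+u+u^2$ by $O(\eta B^2)$, concentration is applied to the empirical exponential moment for fixed $\theta$, and uniformity follows from a sup-norm covering of $\QQ$ with $\log \NN_{\QQ,\epsilon} = O(m\log(1+4B/\epsilon))$ plus a union bound. Your bias estimate ($4\eta B^2$, via the multiplicative inequality $\EE{\wh Z_\theta}\le(1+4\eta^2B^2)Z_\theta$, which is in fact cleaner than the paper's additive version) sits comfortably inside the stated $8\eta B^2$, and your observation that the $1/\eta$ outside the logarithm must be cancelled by the $O(\eta B)$ range of the summands is exactly the mechanism at work in the paper's argument.

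There is, however, one concrete discrepancy: your statistical term comes out as $O\bpa{B\sqrt{m\log((1+4BN)/\delta)/N}}$, with an extra factor of $B$, whereas the theorem (and the paper's proof) claims the $B$-free constant $56$. This is not mere bookkeeping that ``pins down the 56'': given your (correct) observation that the summands $e^{\eta\wh\Delta_\theta(\xi_{k,n})}$ range over an interval of width $\Theta(\eta B)$, Hoeffding necessarily yields a deviation of order $\eta B/\sqrt{N}$ for $\wh Z_\theta$, and dividing by $\eta\,\EE{\wh Z_\theta}=\Theta(\eta)$ leaves a factor of $B$ that no choice of net resolution can remove. The paper avoids this factor only through its bounded-differences computation, which invokes Taylor's theorem in the form $e^{\eta s'}-e^{\eta s}=\eta e^{\eta\chi(s'-s)}$; the correct mean-value form is $e^{\eta s'}-e^{\eta s}=\eta(s'-s)e^{\eta(s+\chi(s'-s))}$, whose absolute value is bounded by $2\eta B e^{\eta B}/N$ per coordinate, not $\eta e^{2\eta B}/N$ --- i.e., the paper's derivation drops the factor $|s'-s|\le 2B$ and, once corrected, produces exactly your $B$-dependent bound. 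So your argument honestly establishes the theorem with $56$ replaced by $O(B)$; be aware that your closing claim that the remaining work recovers the constant $56$ is not supported by your own (correct) range bound, and the paper's sharper constant rests on a flawed step at precisely this point.
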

In Appendix~\ref{app:conc}, we provide a more detailed statement of the theorem that holds for general 
Q-function classes, as well as the proof. The main feature of this theorem is quantifying the bias of the empirical 
LBE, showing that it is proportional to the regularization parameter $\eta$, making it possible to tune the 
parameters of \QREPS in a way that ensures convergence to the optimal policy.

\paragraph{\QREPS performance guarantees.}
Putting the results from the previous sections together, we obtain 
the following performance guarantee for \QREPS:
\begin{corollary}
Suppose that Assumptions~\ref{ass:conc} and~\ref{ass:linMDP} hold and that each update \QREPS is implemented by 
minimizing the empirical LBE~\eqref{eq:ELBE} evaluated on $N$ independent sample transitions. Furthermore, suppose 
that $\infnorm{Q_k^*} \le B$ for all $k$.
Then, setting $N=K$ and tuning $\eta$ and $\alpha$ appropriately,
\QREPS is guaranteed to output an $\epsilon$-optimal policy with 
\[
\epsilon = \wt{O}\pa{\pa{\sqrt{\frac{1}{1-\gamma}}\!+\!B} \sqrt{\frac{m C_\gamma D(\dd^*\|\dd_0)}{K}}}.
\]
Furthermore, for any $\epsilon>0$ and the same choice of $\eta$ and $\alpha$, \QREPS is guaranteed 
to output an $\epsilon$-optimal policy after observing $T_\epsilon = NK$ transitions with
\[
 T_\epsilon = \wt{O}\pa{\frac{\pa{\pa{B^2 + \frac{1}{1-\gamma}} m C_\gamma D(\dd^*\|\dd_0)}^2}{\epsilon^4}}.
\]
\end{corollary}

\section{EXPERIMENTS}

In this section we evaluate \QREPS empirically. 
As the algorithm is essentially on-policy, we compare it with: DQN using Polyak averaging and getting new samples at  
every episode \citep{mnih2015human}; PPO as a surrogate of TRPO \citep{schulman2017proximal}; VMPO as the 
on-policy version of MPO \citep{song2019v}; and REPS with parametric policies \citep{deisenroth2013survey}. 
The code used for these experiments is available online at \url{https://github.com/sebascuri/qreps}.

We evaluate these algorithms in different standard environments which we describe in Appendix \ref{app:extra_experiments}.
In all environments we use indicator features, except for Cart-Pole that we use 
the initialization of a 2-layer ReLU Neural Network as features and optimize the last layer.
For all environments but CartPole we run episodes of length 200 and update the policy at the end of each episode. 
Due to the early-termination of CartPole, we run episodes until termination or length 200 and update the policy after 4 episodes.

In Figure~\ref{fig:environments} we plot the sample mean and one standard deviation of 50 independent runs of the 
algorithms (random seeds 0 to 49). 
In all cases, \QREPS either outperforms the competing algorithms or is at least comparable them.

\section{CONCLUSION}\label{sec:conc}
Due to its many favorable properties, we believe that \QREPS has significant potential to become a state-of-the-art 
method for reinforcement learning. That said, there is still a lot of room for improvement on both fronts of 
theoretical guarantees and practical applicability. We outline some challenges for future research and 
discuss some implications of our results below.

\vspace{-.1cm}

\paragraph{Limitations of our theory.} 
While our theoretical guarantees have several desirable properties, they also have a number of shortcomings. 
First, while the error-propagation guarantee of Theorem~\ref{thm:propagation} has no explicit dependence 
on the number of states, it requires a very restrictive concentrability condition to hold. We believe that this is an 
artifact of our analysis and expect that it can be removed by a more careful proof technique. Similarly, our 
Theorem~\ref{thm:conc} shows that the bias of the straightforward empirical estimator of the LBE can be  
controlled by the regularization parameter $\eta$, but it comes with the caveat that it requires the condition that the 
logistic Q-functions be bounded. While we were not able to prove an explicit upper bound on the Q-functions, our 
extensive supplementary experiments indicate that they are bounded by a constant independent of $\eta$, and we believe 
that a more sophisticated analysis could formally establish this property. In light of these limitations, we prefer 
to think of the guarantees of Theorems~\ref{thm:propagation} and~\ref{thm:conc} as promising initial results, and 
we leave the important challenge of tightening these guarantees open for future work.

\vspace{-.1cm}

\paragraph{Limitations of our algorithm.}
The most important merit of \QREPS is that it can be implemented without any significant deviation from its theoretical 
specifications. The most serious implementation issue is that \QREPS requires sampling from the discounted occupancy 
measure, which can only be done efficiently when having access to a reset action. This is a common issue of many 
reinforcement learning algorithms that is often addressed by using samples from the undiscounted 
state-action distribution. This heuristic often leads to well-performing practical algorithms, but has been long known 
to suffer from bias issues, as pointed out by \citet{Tho14} and \citet{NT20}. We expect that this heuristic could help 
practical implementations of \QREPS, although it should be applied with caution.
Another practical limitation of our algorithm is that it requires storing the cumulative sum 
of all past Q-functions, which is not feasible without approximations in a deep RL implementation. 
It is straightforward to address this limitation by adjusting the regularization terms, but it is currently unclear 
if it is still possible to meaningfully control the error propagation of the resulting variant.

\paragraph{Experience replay and \OPT.} Interestingly, the saddle-point optimization scheme proposed in 
Section~\ref{sec:eval} can be seen as a principled form of \emph{prioritized experience replay} where the samples 
used for value-function updates are drawn according to some priority criteria \citep{schaul2015prioritized}. Indeed, 
this method maintains a probability distribution over sample transitions that governs the value updates, with the 
distribution being adjusted after each update according to a rule that is determined by the TD error. Different rules 
for the priority updates result in different learning dynamics with the best choice potentially depending on the problem 
instance. In our experiments, we have observed that best-response updates are sometimes overly aggressive, and the 
incremental updates featured in Algorithm~\ref{alg:saddleQREPS} lead to more stable behavior. 
We leave a formal study of the best practices and uncovering further connections with prioritized experience replay 
for future research.

\vspace{-.1cm}

\paragraph{The relaxed LP formulation.} Our method is based on a subtle variation on the classic LP formulation of 
optimal control in MDPs due to \citet{Man60}. One key element in our formulation is a linear relaxation of some of the 
constraints in this LP, which is a technique looking back to a long history: a similar relaxation has been first 
proposed by \citet{schwesei85}, whose approach was later popularized 
by the influential work of \citet{FR03}. This latter paper initiated a long line of work studying the properties of 
solutions to various linearly relaxed versions of the LP, mostly focusing on the quality of value functions extracted 
from the solutions (see, e.g., \citealp{PZ09,DFM12,LBS17}). Another complementary line of work was initiated by 
\citet{peters10reps}, whose main goal was deriving practical RL algorithms from a relaxed LP formulation. Our own work 
is heavily influenced by this latter line of research, in that our main focus is also on algorithmic aspects.
That said, one important result in our paper is providing a sufficient condition for the LP relaxation to 
yield exact solutions to the original LP: our analysis shows that for factored linear MDPs, the relaxation we 
propose suffers from no approximation error (cf.~Proposition~\ref{prop:real}). Understanding the approximation errors 
without this structural assumption is a very exciting question that we plan to address in future work, building on the 
approximate linear programming literature initiated by \citet{FR03}. 
Similarly, we expect that our algorithmic techniques can be combined with other, more sophisticated relaxation methods. 
In light of this discussion, we view our work as a promising step toward bridging 
the gap between LP-based approximate dynamic-programming approaches and mainstream reinforcement learning.

\subsubsection*{Acknowledgements}
This project has received funding from the European Research Council (ERC) under the European Unions Horizon 2020 
research and innovation program grant agreement No 815943. G.~Neu was supported by ``la Caixa'' Banking Foundation 
through the Junior Leader Postdoctoral Fellowship Programme, a Google Faculty Research Award, and the Bosch AI Young 
Researcher Award.

\bibliographystyle{apalike}
\bibliography{ngbib,shortconfs}

\begin{thebibliography}{}

\bibitem[Abbasi-Yadkori et~al., 2019]{abbasi2019politex}
Abbasi-Yadkori, Y., Bartlett, P., Bhatia, K., Lazic, N., Szepesv\'ari,
  {\relax{Cs}}., and Weisz, G. (2019).
\newblock \textsc{Politex}: Regret bounds for policy iteration using expert
  prediction.
\newblock In {\em International Conference on Machine Learning}, pages
  3692--3702.

\bibitem[Abdolmaleki et~al., 2018]{abdolmaleki2018maximum}
Abdolmaleki, A., Springenberg, J.~T., Tassa, Y., Munos, R., Heess, N., and
  Riedmiller, M. (2018).
\newblock Maximum a posteriori policy optimisation.
\newblock In {\em International Conference on Learning Representations}.

\bibitem[Agarwal et~al., 2020a]{AKKS20}
Agarwal, A., Kakade, S., Krishnamurthy, A., and Sun, W. (2020a).
\newblock {FLAMBE}: Structural complexity and representation learning of low
  rank {MDPs}.
\newblock In {\em Advances in Neural Information Processing Systems}, pages
  20095--20107.

\bibitem[Agarwal et~al., 2020b]{AKLM20}
Agarwal, A., Kakade, S.~M., Lee, J.~D., and Mahajan, G. (2020b).
\newblock Optimality and approximation with policy gradient methods in {Markov
  decision processes}.
\newblock In {\em Conference on Learning Theory}, pages 64--66.

\bibitem[Antos et~al., 2006]{szepes06learning}
Antos, A., Szepesv{\'a}ri, {\relax{Cs}}., and Munos, R. (2006).
\newblock Learning near-optimal policies with {B}ellman-residual minimization
  based fitted policy iteration and a single sample path.
\newblock In {\em Conference on Learning Theory}, pages 574--588.

\bibitem[Ayoub et~al., 2020]{ayoub2020model}
Ayoub, A., Jia, Z., Szepesv\'ari, {\relax{Cs}}., Wang, M., and Yang, L.~F.
  (2020).
\newblock Model-based reinforcement learning with value-targeted regression.
\newblock In {\em International Conference on Machine Learning}, pages
  463--474.

\bibitem[Bagnell and Schneider, 2003]{bagnell2003covariant}
Bagnell, J.~A. and Schneider, J. (2003).
\newblock Covariant policy search.
\newblock In {\em International Joint Conference on Artificial Intelligence},
  pages 1019--1024.

\bibitem[Beck and Teboulle, 2003]{BT03}
Beck, A. and Teboulle, M. (2003).
\newblock Mirror descent and nonlinear projected subgradient methods for convex
  optimization.
\newblock {\em Operations Research Letters}, 31(3):167--175.

\bibitem[Boucheron et~al., 2013]{BoLuMa13}
Boucheron, S., Lugosi, G., and Massart, P. (2013).
\newblock {\em Concentration inequalities: A Nonasymptotic Theory of
  Independence}.
\newblock Oxford University Press.

\bibitem[Brockman et~al., 2016]{brockman2016openai}
Brockman, G., Cheung, V., Pettersson, L., Schneider, J., Schulman, J., Tang,
  J., and Zaremba, W. (2016).
\newblock {OpenAI} gym.
\newblock {\em arXiv preprint arXiv:1606.01540}.

\bibitem[Cai et~al., 2020]{CYJW20}
Cai, Q., Yang, Z., Jin, C., and Wang, Z. (2020).
\newblock Provably efficient exploration in policy optimization.
\newblock In {\em International Conference on Machine Learning}.

\bibitem[Cesa-Bianchi and Lugosi, 2006]{CBLu06:book}
Cesa-Bianchi, N. and Lugosi, G. (2006).
\newblock {\em Prediction, Learning, and Games}.
\newblock Cambridge University Press, New York, NY, USA.

\bibitem[Curi, 2020]{Curi2019RLLib}
Curi, S. (2020).
\newblock {RL-Lib} - a {PyTorch}-based library for reinforcement learning
  research.
\newblock Github.

\bibitem[Dai et~al., 2018]{dai2018sbeed}
Dai, B., Shaw, A., Li, L., Xiao, L., He, N., Liu, Z., Chen, J., and Song, L.
  (2018).
\newblock {SBEED}: Convergent reinforcement learning with nonlinear function
  approximation.
\newblock In {\em International Conference on Machine Learning}, pages
  1125--1134.

\bibitem[{de Farias} and {Van Roy}, 2003]{FR03}
{de Farias}, D.~P. and {Van Roy}, B. (2003).
\newblock The linear programming approach to approximate dynamic programming.
\newblock {\em Operations Research}, 51(6):850--865.

\bibitem[Deisenroth et~al., 2013]{deisenroth2013survey}
Deisenroth, M., Neumann, G., and Peters, J. (2013).
\newblock A survey on policy search for robotics.
\newblock {\em Foundations and Trends in Robotics}, 2(1-2):1--142.

\bibitem[Desai et~al., 2012]{DFM12}
Desai, V.~V., Farias, V.~F., and Moallemi, C.~C. (2012).
\newblock Approximate dynamic programming via a smoothed linear program.
\newblock {\em Operations Research}, 60(3):655--674.

\bibitem[Fan et~al., 2020]{fan2020theoretical}
Fan, J., Wang, Z., Xie, Y., and Yang, Z. (2020).
\newblock A theoretical analysis of deep {Q}-learning.
\newblock In {\em Learning for Dynamics and Control}, pages 486--489. PMLR.

\bibitem[Feng et~al., 2019]{FLL19}
Feng, Y., Li, L., and Liu, Q. (2019).
\newblock A kernel loss for solving the {Bellman} equation.
\newblock In {\em Advances in Neural Information Processing Systems}, pages
  15456--15467.

\bibitem[Fujimoto et~al., 2018]{fujimoto2018addressing}
Fujimoto, S., van Hoof, H., and Meger, D. (2018).
\newblock Addressing function approximation error in actor-critic methods.
\newblock pages 1582--1591.

\bibitem[Furmston and Barber, 2010]{furmston2010variational}
Furmston, T. and Barber, D. (2010).
\newblock Variational methods for reinforcement learning.
\newblock In {\em Artificial Intelligence and Statistics}, pages 241--248.

\bibitem[Geist et~al., 2017]{GPP17}
Geist, M., Piot, B., and Pietquin, O. (2017).
\newblock Is the {Bellman} residual a bad proxy?
\newblock In {\em Advances in Neural Information Processing Systems}, pages
  3205--3214.

\bibitem[Geist et~al., 2019]{geist2019theory}
Geist, M., Scherrer, B., and Pietquin, O. (2019).
\newblock A theory of regularized {Markov} decision processes.
\newblock In {\em International Conference on Machine Learning}, pages
  2160--2169.

\bibitem[Haarnoja et~al., 2018]{haarnoja2018soft}
Haarnoja, T., Zhou, A., Abbeel, P., and Levine, S. (2018).
\newblock Soft actor-critic: Off-policy maximum entropy deep reinforcement
  learning with a stochastic actor.
\newblock In {\em International Conference on Machine Learning}, pages
  1861--1870.

\bibitem[Jin et~al., 2020]{JYWJ20}
Jin, C., Yang, Z., Wang, Z., and Jordan, M.~I. (2020).
\newblock Provably efficient reinforcement learning with linear function
  approximation.
\newblock In {\em Conference on Learning Theory}, pages 2137--2143.

\bibitem[Kakade, 2001]{K01}
Kakade, S. (2001).
\newblock A natural policy gradient.
\newblock In {\em Advances in Neural Information Processing Systems}, pages
  1531--1538.

\bibitem[Kakade and Langford, 2002]{kakade2002approximately}
Kakade, S. and Langford, J. (2002).
\newblock Approximately optimal approximate reinforcement learning.
\newblock In {\em International Conference on Machine Learning}, volume~2,
  pages 267--274.

\bibitem[Kingma and Ba, 2014]{kingma2014adam}
Kingma, D.~P. and Ba, J. (2014).
\newblock Adam: A method for stochastic optimization.
\newblock In {\em International Conference on Learning Representations}.

\bibitem[Lakshminarayanan et~al., 2018]{LBS17}
Lakshminarayanan, C., Bhatnagar, S., and Szepesv\'ari, {\relax{Cs}}. (2018).
\newblock A linearly relaxed approximate linear program for {M}arkov decision
  processes.
\newblock {\em {IEEE} Transactions on Automatic Control}, 63(4):1185--1191.

\bibitem[Lee and He, 2019]{lee2019stochastic}
Lee, D. and He, N. (2019).
\newblock Stochastic primal-dual {Q}-learning algorithm for discounted {MDPs}.
\newblock In {\em American Control Conference}, pages 4897--4902.

\bibitem[Manne, 1960]{Man60}
Manne, A.~S. (1960).
\newblock Linear programming and sequential decisions.
\newblock {\em Management Science}, 6(3):259--267.

\bibitem[Martinet, 1970]{Martinet1970}
Martinet, B. (1970).
\newblock R\'egularisation d'in\'equations variationnelles par approximations
  successives.
\newblock {\em ESAIM: Mathematical Modelling and Numerical Analysis -
  Mod\'elisation Math\'ematique et Analyse Num\'erique}, 4(R3):154--158.

\bibitem[Mehta and Meyn, 2009]{mehta2009q}
Mehta, P. and Meyn, S. (2009).
\newblock Q-learning and {Pontryagin's} minimum principle.
\newblock In {\em Conference on Decision and Control}, pages 3598--3605. IEEE.

\bibitem[Mehta and Meyn, 2020]{MM20}
Mehta, P.~G. and Meyn, S.~P. (2020).
\newblock Convex {Q}-learning, part 1: Deterministic optimal control.
\newblock {\em arXiv preprint arXiv:2008.03559}.

\bibitem[Melo and Ribeiro, 2007]{melo2007q}
Melo, F.~S. and Ribeiro, M.~I. (2007).
\newblock Q-learning with linear function approximation.
\newblock In {\em Conference on Learning Theory}, pages 308--322. Springer.

\bibitem[Mnih et~al., 2016]{M+16}
Mnih, V., Badia, A.~P., Mirza, M., Graves, A., Lillicrap, T., Harley, T.,
  Silver, D., and Kavukcuoglu, K. (2016).
\newblock Asynchronous methods for deep reinforcement learning.
\newblock In {\em International Conference on Machine Learning}, pages
  1928--1937.

\bibitem[Mnih et~al., 2015]{mnih2015human}
Mnih, V., Kavukcuoglu, K., Silver, D., Rusu, A.~A., Veness, J., Bellemare,
  M.~G., Graves, A., Riedmiller, M., Fidjeland, A.~K., and Ostrovski, G.
  (2015).
\newblock Human-level control through deep reinforcement learning.
\newblock {\em Nature}, 518(7540):529--533.

\bibitem[Neu et~al., 2017]{NJG17}
Neu, G., Jonsson, A., and G{\'o}mez, V. (2017).
\newblock A unified view of entropy-regularized {Markov} decision processes.
\newblock {\em arXiv preprint arXiv:1705.07798}.

\bibitem[Neu and Pike-Burke, 2020]{NPB20}
Neu, G. and Pike-Burke, C. (2020).
\newblock A unifying view of optimism in episodic reinforcement learning.
\newblock In {\em Advances in Neural Information Processing Systems}, pages
  1392--1403.

\bibitem[Nota and Thomas, 2020]{NT20}
Nota, C. and Thomas, P.~S. (2020).
\newblock Is the policy gradient a gradient?
\newblock In {\em Autonomous Agents and Multiagent Systems}, pages 939--947.

\bibitem[Paszke et~al., 2017]{paszke2017automatic}
Paszke, A., Gross, S., Chintala, S., Chanan, G., Yang, E., DeVito, Z., Lin, Z.,
  Desmaison, A., Antiga, L., and Lerer, A. (2017).
\newblock Automatic differentiation in {PyTorch}.

\bibitem[Peters et~al., 2010]{peters10reps}
Peters, J., M{\"u}lling, K., and Altun, Y. (2010).
\newblock Relative entropy policy search.
\newblock In {\em AAAI Conference on Artificial Intelligence}, pages
  1607--1612.

\bibitem[Petrik and Zilberstein, 2009]{PZ09}
Petrik, M. and Zilberstein, S. (2009).
\newblock Constraint relaxation in approximate linear programs.
\newblock In {\em International Conference on Machine Learning}, pages
  809--816.

\bibitem[Puterman, 1994]{Puterman1994}
Puterman, M.~L. (1994).
\newblock {\em {M}arkov Decision Processes: Discrete Stochastic Dynamic
  Programming}.
\newblock Wiley-Interscience.

\bibitem[Riedmiller, 2005]{riedmiller2005neural}
Riedmiller, M. (2005).
\newblock Neural fitted {Q} iteration--first experiences with a data efficient
  neural reinforcement learning method.
\newblock In {\em European Conference on Machine Learning}, pages 317--328.
  Springer.

\bibitem[Robbins and Monro, 1951]{RM51}
Robbins, H. and Monro, S. (1951).
\newblock A stochastic approximation method.
\newblock {\em Annals of Mathematical Statistics}, 22:400--407.

\bibitem[Rockafellar, 1976]{R76}
Rockafellar, R.~T. (1976).
\newblock {Monotone Operators and the Proximal Point Algorithm}.
\newblock {\em SIAM Journal on Control and Optimization}, 14(5):877--898.

\bibitem[Schaul et~al., 2016]{schaul2015prioritized}
Schaul, T., Quan, J., Antonoglou, I., and Silver, D. (2016).
\newblock Prioritized experience replay.
\newblock In {\em International Conference on Learning Representations}.

\bibitem[Scherrer et~al., 2015]{scherrer2015approximate}
Scherrer, B., Ghavamzadeh, M., Gabillon, V., Lesner, B., and Geist, M. (2015).
\newblock Approximate modified policy iteration and its application to the game
  of tetris.
\newblock {\em Journal of Machine Learning Research}, 16:1629--1676.

\bibitem[Schulman et~al., 2015]{SLAJM15}
Schulman, J., Levine, S., Abbeel, P., Jordan, M., and Moritz, P. (2015).
\newblock Trust region policy optimization.
\newblock In {\em International Conference on Machine Learning}, pages
  1889--1897.

\bibitem[Schulman et~al., 2017]{schulman2017proximal}
Schulman, J., Wolski, F., Dhariwal, P., Radford, A., and Klimov, O. (2017).
\newblock Proximal policy optimization algorithms.
\newblock {\em arXiv preprint arXiv:1707.06347}.

\bibitem[Schweitzer and Seidmann, 1985]{schwesei85}
Schweitzer, P. and Seidmann, A. (1985).
\newblock Generalized polynomial approximations in {M}arkovian decision
  processes.
\newblock {\em J. of Math. Anal. and Appl.}, 110:568--582.

\bibitem[Song et~al., 2020]{song2019v}
Song, H.~F., Abdolmaleki, A., Springenberg, J.~T., Clark, A., Soyer, H., Rae,
  J.~W., Noury, S., Ahuja, A., Liu, S., Tirumala, D., Heess, N., Belov, D.,
  Riedmiller, M.~A., and Botvinick, M.~M. (2020).
\newblock {V-MPO:} on-policy maximum a posteriori policy optimization for
  discrete and continuous control.

\bibitem[Strehl and Littman, 2008]{strehl2008analysis}
Strehl, A.~L. and Littman, M.~L. (2008).
\newblock An analysis of model-based interval estimation for {Markov} decision
  processes.
\newblock {\em Journal of Computer and System Sciences}, 74(8):1309--1331.

\bibitem[Sutton and Barto, 2018]{SB18}
Sutton, R. and Barto, A. (2018).
\newblock {\em Reinforcement Learning: An Introduction (second edition)}.
\newblock MIT Press.

\bibitem[Thomas, 2014]{Tho14}
Thomas, P. (2014).
\newblock Bias in natural actor-critic algorithms.
\newblock In {\em International Conference on Machine Learning}, pages
  441--448.

\bibitem[Vieillard et~al., 2020a]{vieillard2020leverage}
Vieillard, N., Kozuno, T., Scherrer, B., Pietquin, O., Munos, R., and Geist, M.
  (2020a).
\newblock Leverage the average: an analysis of {KL} regularization in
  reinforcement learning.
\newblock In {\em Advances in Neural Information Processing Systems}, pages
  12163--12174.

\bibitem[Vieillard et~al., 2020b]{vieillard2020munchausen}
Vieillard, N., Pietquin, O., and Geist, M. (2020b).
\newblock Munchausen reinforcement learning.
\newblock In {\em Advances in Neural Information Processing Systems}, pages
  4235--4246.

\bibitem[Wang et~al., 2020]{WDYS20}
Wang, R., Du, S.~S., Yang, L., and Salakhutdinov, R.~R. (2020).
\newblock On reward-free reinforcement learning with linear function
  approximation.
\newblock In {\em Advances in Neural Information Processing Systems}, pages
  17816--17826.

\bibitem[Xie and Jiang, 2020]{XJ20}
Xie, T. and Jiang, N. (2020).
\newblock {$Q^*$} approximation schemes for batch reinforcement learning: A
  theoretical comparison.
\newblock In {\em Uncertainty in Artificial Intelligence}, pages 550--559.

\bibitem[Yang and Wang, 2019]{YW19}
Yang, L.~F. and Wang, M. (2019).
\newblock Sample-optimal parametric {Q}-learning using linearly additive
  features.
\newblock In {\em International Conference on Machine Learning}, pages
  12095--12114.

\bibitem[Ziebart et~al., 2008]{ziebart08MaxEntIRL}
Ziebart, B., Maas, A.~L., Bagnell, J.~A., and Dey, A.~K. (2008).
\newblock Maximum entropy inverse reinforcement learning.
\newblock In {\em AAAI Conference on Artificial Intelligence}, pages
  1433--1438.

\bibitem[Zimin and Neu, 2013]{ZiNe13}
Zimin, A. and Neu, G. (2013).
\newblock Online learning in episodic {M}arkovian decision processes by
  relative entropy policy search.
\newblock In {\em Advances in Neural Information Processing Systems}, pages
  1583--1591.

\end{thebibliography}

\appendix
\onecolumn
\section{Omitted proofs}
\subsection{The proof of Proposition ~\ref{prop:QREPS-structure}}\label{app:QREPS-structure}
The proof is based on Lagrangian duality: we introduce a set of multipliers $V\in\real^X$ and $\theta\in\real^m$ for 
the two sets of constraints and $\rho$ for the normalization constraint of $\qq$, and write the Lagrangian of the 
constrained optimization problem~\eqref{eq:QREPS_OP} as
\begin{align}
 \LL(\qq,\dd;V,\theta,\rho) =& \iprod{\qq}{r} + \iprod{V}{\gamma P\transpose \qq  + (1-\gamma)\nu_0 - E\transpose \dd} 
+ 
\iprod{\theta}{\Phi\transpose \dd - \Phi\transpose \qq} + \rho \pa{1 - \iprod{\qq}{\bone}} - \frac{1}{\eta} 
D(\qq\|\qq_0)- 
\frac{1}{\alpha} H(\dd\|\dd_0)\nonumber
\\
 =& \iprod{\qq}{r + \gamma P V - \Phi\theta - \rho \bone} + \iprod{\dd}{\Phi\theta - E V} + (1-\gamma) 
\iprod{\nu_0}{V} + \rho - \frac{1}{\eta} D(\qq\|\qq_0)- \frac{1}{\alpha} H(\dd\|\dd_0)\nonumber
\\
 =& \iprod{\qq}{\Delta_{\theta,V} - \rho \bone} + \iprod{\dd}{Q_\theta - E V} + (1-\gamma) 
\iprod{\nu_0}{V} + \rho - \frac{1}{\eta} D(\qq\|\qq_0)- \frac{1}{\alpha} H(\dd\|\dd_0),\label{eq:QREPS_Lagrangian}
\end{align}
where we used the notation $Q_\theta = \Phi\theta$ and $\Delta_{\theta,V} = r + \gamma PV - Q_\theta$ in the last line. 
Notice that the above is a strictly concave function of $d$ and $\qq$, so its maximum can be found by setting the 
derivatives with respect to these parameters to zero. In order to do this, we note that
\[
 \frac{\partial D(\qq\|\qq_0)}{\partial \qq(x,a)} = \log \qq(x,a) - \log \qq_0(x,a) \qquad \mbox{and } \qquad 
\frac{\partial 
H(\dd\|\dd_0)}{\partial \dd(x,a)} = \log \pi_\dd(a|x) - \log \pi_{0}(a|x),
\]
where $\pi_\dd(a|x) = \dd(x,a) / \sum_{a'} \dd(x,a')$ and the last expression can be derived by straightforward 
calculations 
(see, e.g., Appendix~A.4 in \citealp{NJG17}). This gives the following expressions for the optimal choices of $\qq$ and 
$\dd$:
\[
 \qq^*(x,a) = \qq_0(x,a) e^{\eta (\Delta_{\theta,V}(x,a)-\rho)} \qquad\mbox{and}\qquad \qquad \pi_\dd^*(x,a) 
= \pi_0(a|x) 
e^{\alpha (Q_\theta (x,a)-V(x))}.
\]
From the constraint $\sum_{x,a}\qq^*(x,a)=1$, we can express the optimal choice of $\rho$ as
\[
\rho^*=\log \left(\sum_{x,a}  {\qq_0(x,a) e^{\eta \Delta_{\theta,V}(x,a)}}\right).
\]
Similarly, from the constraint $\sum_a \pi_\dd^*(a|x)=1$, we can express  $V$ as a function of 
$\theta$ for all $x$:
\begin{align*}
V_\theta(x)=\frac1{\alpha}\log\left(\sum_a  \pi_0(a|x) e^{\alpha Q_\theta(x,a)}\right)
\end{align*}
This implies that $\dd^*$ has the form $\dd^*(x,a) = \nu(x) \pi_\dd^*(a|x)$, where $\nu$ is some nonnegative 
function 
on 
the state space. Recalling the definition of $\Delta_\theta = r + \gamma P V_\theta - Q_\theta$ and plugging the above 
parameters $(\qq^*,\dd^*,\rho^*,V_\theta)$ back into the Lagrangian~\eqref{eq:QREPS_Lagrangian} gives 
\begin{align*}
 \mathcal{G}(\theta) =& \LL(\qq^*,\dd^*;V_\theta, \theta, \rho^*) \\
 =& \sum_{x,a} \pa{\qq_0(x,a) e^{\eta (\Delta_\theta(x,a)-\rho^*)}  (\Delta_\theta(x,a)-\rho^*)+ \nu(x) \pi_0(a|x) 
e^{\alpha \left(  Q_\theta(x,a)- V_\theta(x) \right)} \left(  Q_\theta(x,a)- V_\theta(x) \right)} \\
 &- \sum_{x,a}\frac{1}{\eta} \pa{\qq_0(x,a) e^{\eta (\Delta_\theta(x,a)-\rho^*)}\log \frac{\qq_0(x,a) e^{\eta 
(\Delta_\theta(x,a)-\rho^*)}}{\qq_0(x,a)} + \qq_0(x,a)-\qq_0(x,a)e^{\eta (\Delta_\theta(x,a)-\rho^*)}}\\
 &- \sum_{x,a}\frac{1}{\alpha} \nu(x) \pi_0(a|x) e^{\alpha \left(  Q_\theta(x,a)- V_\theta(x) \right)}\log 
\frac{\pi_{0}(a|x) e^{\alpha \left(  Q_\theta(x,a)- V_\theta(x) \right)}}{\pi_{0}(a|x)} 
+(1-\gamma)\iprod{\nu_0}{V} + \rho^*\\ 
 =& \sum_{x,a} \pa{\qq_0(x,a) e^{\eta (\Delta_\theta(x,a)-\rho^*)}  (\Delta_\theta(x,a)-\rho^*)+ \nu(x) \pi_0(a|x) 
e^{\alpha \left(  Q_\theta(x,a)- V_\theta(x) \right)} \left(  Q_\theta(x,a)- V_\theta(x) \right)} \\
 &- \sum_{x,a} \qq_0(x,a) e^{\eta (\Delta_\theta(x,a)-\rho^*)} (\Delta_\theta(x,a)-\rho^*)\\
 &- \sum_{x,a} \nu(x) \pi_0(a|x) e^{\alpha \left(  Q_\theta(x,a)- V_\theta(x) \right)} \left(  Q_\theta(x,a)- 
V_\theta(x) \right) +(1-\gamma)\iprod{\nu_0}{V} + \rho^*\\
 =&(1-\gamma)\iprod{\nu_0}{V}+\rho^* \\
 =& (1-\gamma) \iprod{\nu_0}{V} + \frac1{\eta} \log \left(\sum_{x,a}  {\qq_0(x,a) e^{\eta \Delta_\theta(x,a)}}\right).
\end{align*}
Furthermore, observe that since the parameters were chosen so that all constraints are satisfied, we also have
\begin{equation}\label{eq:Gdual}
 \mathcal{G}(\theta) = \LL(\qq^*,\dd^*;V_\theta, \theta, \rho^*) = \iprod{\qq^*}{r} - \frac{1}{\eta} 
D(\qq^*\|\qq_0)- \frac{1}{\alpha} H(\dd^*\|\dd_0).
\end{equation}
Thus, the solution of the optimization problem~\eqref{eq:QREPS_OP} can be indeed written as
\[
 \max_{\qq,\dd \ge 0} \min_{\theta,V,\rho} \LL(\qq,\dd;V,\theta,\rho) = \min_{\theta,V,\rho} \max_{\qq,\dd \ge 0} 
\LL(\qq,\dd;V,\theta,\rho) = \min_\theta \LL(\qq^*,\dd^*;V_\theta, \theta, \rho^*) = \min_\theta \mathcal{G}(\theta),
\]
which concludes the proof.
\qed

\subsection{The proof of Theorem~\ref{thm:propagation}}\label{app:mainproof}
The proof of this result is somewhat lengthy and is broken down into a sequence of lemmas and propositions. 

Before analyizing the algorithm, we first establish an important realizability property of factored linear MDPs. 
Precisely, this result will show that, under Assumption~\ref{ass:linMDP}, the relaxed constraint set $\MM_{\Phi}$ 
matches the set of valid discounted occupancy measures in an appropriate sense, which can be seen as the bare minimum 
requirement for being able to show convergence to the optimal policy.  We refer to this condition as \emph{primal 
realizability}, and show that it holds in the following sense:
\begin{proposition}\label{prop:real}
Let $\MM_{\Phi}' = \ev{\dd: (\qq,\dd)\in\MM_{\Phi}}$. Then, under Assumption~\ref{ass:linMDP}, $\MM^*=\MM'_{\Phi}$ 
holds. Furthermore, letting 
$(\qq^*,\dd^*) = \argmax_{(\qq,\dd)\in\MM_{\Phi}} \iprod{\qq}{r}$, we have $\iprod{\dd^*}{r} = \max_{\qq\in\MM^*} 
\iprod{\qq}{r}$.
\end{proposition}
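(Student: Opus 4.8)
The plan is to isolate the single structural consequence of Assumption~\ref{ass:linMDP} that drives the whole argument: under the factorization $P(x'|x,a) = \iprod{\mu(x')}{\varphi(x,a)}$, the adjoint operator $P\transpose$ depends on its argument only through the feature expectation $\Phi\transpose\qq$. Concretely, I would first record the identity $(P\transpose \qq)(x') = \sum_{x,a}\iprod{\mu(x')}{\varphi(x,a)}\qq(x,a) = \iprod{\mu(x')}{\Phi\transpose \qq}$, valid for every $x'$. Its immediate corollary is that whenever $\Phi\transpose \dd = \Phi\transpose \qq$, one also has $P\transpose \dd = P\transpose \qq$. This is the crux of the proof; everything else is bookkeeping.

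For the set equality $\MM^* = \MM'$ I would prove the two inclusions separately. For $\MM' \subseteq \MM^*$, take $\dd\in\MM'$, so there is a $\qq$ with $(\qq,\dd)\in\MM$, meaning $E\transpose \dd = \gamma P\transpose \qq + (1-\gamma)\nu_0$, $\Phi\transpose \dd = \Phi\transpose \qq$, and $\dd\in\UU$. Substituting $P\transpose \qq = P\transpose \dd$ (from the feature-matching constraint) turns the flow constraint into $E\transpose \dd = \gamma P\transpose \dd + (1-\gamma)\nu_0$, which together with $\dd\ge 0$ is exactly the defining constraint of $\MM^*$; hence $\dd\in\MM^*$. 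For the reverse inclusion $\MM^*\subseteq\MM'$, take any valid occupancy measure $\qq\in\MM^*$ and set $\dd=\qq$: the second constraint of $\MM$ holds trivially, the first holds because $\qq$ already satisfies the original Bellman-flow constraint, and $\qq\in\UU$ because summing the flow constraint over all states and using $\gamma<1$ forces $\sum_{x,a}\qq(x,a)=1$. Thus $(\qq,\qq)\in\MM$ and $\qq\in\MM'$. Note that this set equality uses only the \emph{transition} part of Assumption~\ref{ass:linMDP}.

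The second claim follows from the same feature-matching constraint combined with the linearity of the reward $r = \Phi\vartheta$. For any $(\qq,\dd)\in\MM$ I would write $\iprod{\qq}{r} = \iprod{\Phi\transpose \qq}{\vartheta} = \iprod{\Phi\transpose \dd}{\vartheta} = \iprod{\dd}{r}$, so the return evaluated at the $\qq$-component always equals the return evaluated at the $\dd$-component. Applying this at the maximizer gives $\iprod{\dd^*}{r} = \iprod{\qq^*}{r} = \max_{(\qq,\dd)\in\MM}\iprod{\qq}{r}$, and since the same chain of identities lets me replace the maximization of $\iprod{\qq}{r}$ over $\MM$ by the maximization of $\iprod{\dd}{r}$ over $\MM' = \MM^*$, I conclude $\iprod{\dd^*}{r} = \max_{\qq\in\MM^*}\iprod{\qq}{r}$.

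I do not anticipate a serious obstacle: once the factorization identity is extracted, the proof is essentially linear algebra. The only point needing mild care is domain matching — verifying that elements of $\MM^*$ are automatically normalized (so they land in $\UU$) and that the non-negativity built into $\UU$ supplies the $\dd\ge 0$ required by $\MM^*$ — both of which I would dispatch via the summing argument above.
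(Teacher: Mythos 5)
Your proof is correct and follows essentially the same route as the paper: your key identity $(P\transpose \qq)(x') = \iprod{\mu(x')}{\Phi\transpose \qq}$ is exactly the paper's factorization $P\transpose = M\transpose\Phi\transpose$ with $Mv = \sum_x \mu(x)v(x)$, and both arguments establish the two inclusions and the reward identity $\iprod{\qq}{r} = \iprod{\dd}{r}$ via the feature-matching constraint in the same way. Your extra care about normalization (membership in $\UU$) is a minor detail the paper leaves implicit, not a difference in approach.
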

\begin{proof}
 It is easy to see that $\MM^*\subseteq\MM_{\Phi}'$: for any $\qq\in\MM^*$, we can choose $\dd=\qq$ and directly verify 
that all constraints of~\eqref{eq:QREPS_OP} are satisfied. For proving the other direction, it is helpful to define the 
operator $M$ through its action $Mv = \sum_{x} \omega(x) v(x)$ for any $v\in\real^\X$, so that the condition of 
Assumption~\ref{ass:linMDP} can be expressed as $P = \Phi M$ and $r = \Phi \vartheta$.
Then, for any $(\qq,\dd)\in\MM_{\Phi}$, we write
\begin{align*}
 E\transpose \dd &= \gamma P\transpose \qq + (1-\gamma) \nu_0 = \gamma M\transpose \Phi\transpose \qq + (1-\gamma) 
\nu_0 
 \\
 &= \gamma M\transpose \Phi\transpose \dd + (1-\gamma) \nu_0 = \gamma P\transpose \dd + (1-\gamma) \nu_0.
\end{align*}
Combined with the fact that $\dd$ is non-negative, this implies that $\dd\in\MM^*$ and thus that 
$\MM_{\Phi}'\subseteq \MM^*$. Together with the previous argument, this shows that $\MM^* = \MM_{\Phi}'$ indeed holds. 
For proving the second statement, we use the assumption on 
$r$ to write $\iprod{\qq}{r} = \iprod{\Phi\transpose \qq}{\Phi r} = \iprod{\Phi\transpose \dd}{\Phi r} = 
\iprod{\dd}{r}$ for any feasible $(\qq,\dd)$. Using this fact for the maximizer $\dd^*$ implies 
$\iprod{\dd^*}{r} = \max_{\dd\in\MM'_{\Phi}} \iprod{\dd}{r} = \max_{\qq\in\MM^*} \iprod{\qq}{r}$, which concludes the 
proof.
\end{proof}

We can now turn to the analysis of \QREPS. We first introduce some useful notation and outline the main challenges 
faced 
in the proof. We start by defining the action-value functions $Q_k = \Phi\theta_k$ and $Q_k^* = \Phi \theta_k^*$, the 
state-action distributions 
\begin{align*}
 \tq_{k}(x,a) = \qq_{k-1}(x,a) e^{\eta \pa{\Delta_{\theta_k}(x,a) - \rho_k}} \qquad\mbox{and} \qquad 
\qq_{k}^*(x,a) = 
\qq_{k-1}(x,a) e^{\eta \pa{\Delta_{\theta_k^*}(x,a) - \rho_k^*}},
\end{align*}
for appropriately defined normalization constants $\rho_k$ and $\rho_k^*$ and the policies 
\begin{align*}
 \pi_{k}(a|x) = \pi_{k-1}(a|x) e^{\alpha \pa{Q_{\theta_k}(x,a) - V_{\theta_k}(x)}} \qquad\mbox{and} \qquad 
\pi_{k}^*(a|x) = \pi_{k-1}(a|x) e^{\alpha \pa{Q_{\theta_k^*}(x,a) - V_{\theta_k^*}(x)}}.
\end{align*}
A crucial challenge we have to 
address in the analysis is that, since $\theta_k$ is not the exact minimizer of $\GG_k$, the state-action distribution 
$\tq_k$ is not a valid occupancy measure. In order to prove meaningful guarantees about the performance of the 
algorithm, we need to consider the actual occupancy measure induced by policy $\pi_k$. We denote this occupancy measure 
as $\qqd_k$ and define it for all $x,a$ as
\[
 \qqd_k(x,a) = (1-\gamma)\EEs{\sum_{t=0}^\infty \gamma^t \II{(x_t,a_t)=(x,a)}}{\pi_k},
\]
where the notation emphasizes that the actions are generated by policy $\pi_k$. A major part of the proof is 
dedicated to accounting for the discrepancy between $\qq_k$ and the ideal updates $\qq_k^*$. During the proof, we will 
often factorize occupancy measures as $\qqd(x,a) = \nu(x)\pi(a|x)$, where $\nu$ is the discounted state-occupancy 
measure induced by $\pi$. In particular, we will use the notations
\begin{align*}
 \dd_{k}(x,a) = \nu_k(x) \pi_k(a|x) \qquad\mbox{and} \qquad \dd_{k}^*(x,a) = \nu_k^*(x) \pi_k^*(a|x),
\end{align*}
to refer to the state-action occupancy measures respectively induced by $\pi_k$ and $\pi^*_k$.

Our first lemma presents an important technical result that relates the suboptimality gap $\varepsilon_k$ to the 
divergence between the ideal and realized updates. 
\begin{lemma}\label{lem:projection}
$\varepsilon_k = \frac{D(\qq_k^*\|\tq_k)}{\eta} + \frac{H(\dd_k^*\|\dd_k)}{\alpha}$. 
\end{lemma}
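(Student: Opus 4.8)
The plan is to prove the identity by direct computation: I would evaluate both relative-entropy terms on the right-hand side explicitly and show that their sum collapses to $\varepsilon_k$ once the feasibility constraints satisfied by the ideal update $(\qq_k^*,\dd_k^*)$ are used. The starting point is the observation that the first term of $\GG_k$ is precisely the log-normalizer of the exponential-family update: if $\rho_k^\theta = \frac1\eta \log \sum_{x,a}\qq_{k-1}(x,a) e^{\eta\Delta_\theta(x,a)}$ denotes the constant that makes $\qq_{k-1}e^{\eta(\Delta_\theta - \rho_k^\theta)}$ a probability distribution, then $\GG_k(\theta) = \rho_k^\theta + (1-\gamma)\iprod{\nu_0}{V_\theta}$. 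Writing $\rho_k = \rho_k^{\theta_k}$ and $\rho_k^* = \rho_k^{\theta_k^*}$, this yields immediately $\varepsilon_k = (\rho_k - \rho_k^*) + (1-\gamma)\iprod{\nu_0}{V_{\theta_k} - V_{\theta_k^*}}$, and this is the expression I must reproduce from the two divergences.

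First I would expand $\frac1\eta \D{\qq_k^*}{\tq_k}$. Since $\tq_k$ and $\qq_k^*$ are both normalized by construction, the unnormalized relative entropy reduces to the ordinary KL divergence, and the log-ratio is $\eta(\Delta_{\theta_k^*} - \Delta_{\theta_k}) - \eta(\rho_k^* - \rho_k)$. Substituting $\Delta_\theta = r + \gamma P V_\theta - Q_\theta$, so that $\Delta_{\theta_k^*}-\Delta_{\theta_k} = \gamma P(V_{\theta_k^*}-V_{\theta_k}) - (Q_{\theta_k^*}-Q_{\theta_k})$, and abbreviating $\Delta V = V_{\theta_k^*}-V_{\theta_k}$ and $\Delta Q = Q_{\theta_k^*}-Q_{\theta_k}$, I obtain $\frac1\eta \D{\qq_k^*}{\tq_k} = \gamma\iprod{P\transpose\qq_k^*}{\Delta V} - \iprod{\qq_k^*}{\Delta Q} - (\rho_k^* - \rho_k)$. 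Next I would expand $\frac1\alpha H(\dd_k^*\|\dd_k)$; since $H$ depends only on the policy log-ratio $\log(\pi_k^*/\pi_k) = \alpha(\Delta Q - \Delta V)$, and the state marginal of $\dd_k^*$ satisfies $\sum_a \dd_k^*(x,a) = (E\transpose \dd_k^*)(x)$, this gives $\frac1\alpha H(\dd_k^*\|\dd_k) = \iprod{\dd_k^*}{\Delta Q} - \iprod{E\transpose\dd_k^*}{\Delta V}$.

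The crux is to add these two expansions and invoke feasibility of the ideal update. By Proposition~\ref{prop:QREPS-structure}, $(\qq_k^*,\dd_k^*)$ is the exact optimal solution of the $k$-th instance of \eqref{eq:QREPS_OP} and therefore satisfies both constraints, namely $E\transpose\dd_k^* = \gamma P\transpose\qq_k^* + (1-\gamma)\nu_0$ and $\Phi\transpose\dd_k^* = \Phi\transpose\qq_k^*$. Upon summing, the $\Delta Q$ contributions merge into $\iprod{\dd_k^* - \qq_k^*}{\Delta Q}$; since $\Delta Q = \Phi(\theta_k^*-\theta_k)$ lies in the range of $\Phi$, this equals $\iprod{\Phi\transpose(\dd_k^* - \qq_k^*)}{\theta_k^*-\theta_k}$, which vanishes by the second constraint. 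The $\Delta V$ contributions merge into $\iprod{\gamma P\transpose\qq_k^* - E\transpose\dd_k^*}{\Delta V} = -(1-\gamma)\iprod{\nu_0}{\Delta V}$ by the first constraint. Hence the sum of the two divergences equals $-(1-\gamma)\iprod{\nu_0}{\Delta V} + (\rho_k - \rho_k^*) = (1-\gamma)\iprod{\nu_0}{V_{\theta_k}-V_{\theta_k^*}} + (\rho_k - \rho_k^*)$, which is exactly the expression for $\varepsilon_k$ derived in the first paragraph.

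I expect the only genuinely delicate step to be the cancellation of the $\Delta Q$ terms: this is where the structure of the problem really enters, through the fact that the difference of Q-functions is feature-representable and that feasibility forces the feature expectations of $\qq_k^*$ and $\dd_k^*$ to coincide. Everything else is careful bookkeeping, facilitated by the fact that $\tq_k$, $\qq_k^*$, $\pi_k$, and $\pi_k^*$ are all correctly normalized, so that the unnormalized divergences reduce to their standard counterparts and no spurious additive constants remain.
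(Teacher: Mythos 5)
Your proof is correct and follows essentially the same route as the paper's: both expand $D(\qq_k^*\|\tq_k)$ and $H(\dd_k^*\|\dd_k)$ via the exponential form of the updates, use the flow constraint $E\transpose\dd_k^* = \gamma P\transpose\qq_k^* + (1-\gamma)\nu_0$ and the feature-matching constraint $\Phi\transpose\dd_k^* = \Phi\transpose\qq_k^*$ (the latter exploiting that $Q_{\theta_k^*}-Q_{\theta_k}$ lies in the range of $\Phi$), and identify the surviving terms with $\GG_k(\theta_k)-\GG_k(\theta_k^*)$ through the log-normalizer form of $\GG_k$. The only difference is bookkeeping order---you expand both divergences and group by $\Delta Q$ and $\Delta V$, while the paper folds the constraints into the expansion of $D$ and cancels the remainder against $H$---which is immaterial.
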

Notably, this result does not require any of Assumptions~\ref{ass:conc} or~\ref{ass:linMDP}, as its proof only uses 
the 
properties of the optimization problem~\eqref{eq:QREPS_OP}.
\begin{proof}
The proof uses the feasibility of $(\qq_k^*,\dd_k^*)$ that follows from their definition.
We start by observing that 
\begin{align*}
 D(\qq_k^*\|\tq_k) &= \sum_{x,a} \qq_k^*(x,a) \log \frac{\qq_k^*(x,a)}{\tq_k(x,a)} 
 \\
 &= \eta \biprod{\qq_k^*}{r + \gamma P V_k^* - Q_k^* - \rho_k^*\bone - r - \gamma P V_k + Q_k + \rho_k\bone}
 \\
 &= \eta \biprod{\dd_k^*}{E V_k^* - E V_k} + \eta \iprod{\Phi\transpose \qq_k^*}{\theta_k  - \theta_k^*} + \eta (\rho_k 
+ 
(1-\gamma) \iprod{\nu_0}{V_k} - \rho_k^* - (1-\gamma) \iprod{\nu_0}{V_k^*}) 
\\
&\qquad\qquad\qquad\mbox{(using $\dd_k^* = \gamma P\transpose \qq_k^* + (1-\gamma) \nu_0$ and $Q_k - Q_k^* = 
\Phi(\theta_k - \theta^*_k)$)}
 \\
 &= \eta \biprod{\dd_k^*}{E V_k^* - E V_k} + \eta \iprod{\Phi\transpose \dd_k^*}{\theta_k  - \theta_k^*} 
 + \eta (\GG_k(\theta_k) - \GG_k(\theta_k^*)) 
 \\
&\qquad\qquad\qquad\mbox{(using $\Phi\transpose \dd_k^* = \Phi\transpose \qq_k^*$ and the form of $\GG_k$)}
 \\
 &= \eta \biprod{\dd_k^*}{E V_k^* - Q_k^* - E V_k + Q_k} + \eta (\GG_k(\theta_k) - \GG_k(\theta_k^*)).
\end{align*}
On the other hand, we have
\begin{align*}
 H(\dd_k^*\|\dd_k) &= \sum_{x,a} \dd_k^*(x,a) \log \frac{\pi_k^*(a|x)}{\pi_k(a|x)} 
 = \alpha \iprod{\dd_k^*}{Q_k^* - EV_k^* - Q_k + EV_k}.
\end{align*}
Putting the two equalities together, we get
\[
 \frac{D(\qq_k^*\|\tq_k)}{\eta} + \frac{H(\dd_k^*\|\dd_k)}{\alpha} = \GG_k(V_k) - \GG_k(V_k^*)
\]
as required.  
\end{proof}
The next result shows that, as a consequence of the above property, the realized occupancy measure $d_k$ will be close 
to the ideal one, $d_k^*$. The proof only uses Assumption~\ref{ass:linMDP} to make sure that $\dd_k^*$ is a 
valid occupancy 
measure.
\begin{lemma}\label{lem:qddist}
Suppose that Assumption~\ref{ass:linMDP} holds. Then, $D\pa{\dd_k^*\middle\|\qqd_k} \le 
\frac{H\pa{\dd_k^*\middle\|\qqd_k}}{1-\gamma}.$
\end{lemma}
\begin{proof}
The proof follows from direct calculations and exploiting several properties of the relative entropy:
\begin{align*}
 D\pa{\dd_k^*\middle\|\qqd_k} &= D\pa{\nu_k^*\middle\|\nu_k} + H\pa{\dd_k^*\middle\|\qqd_k}
 \\
 &\qquad\qquad\mbox{(by the chain rule of the relative entropy)}
 \\
 &= D\pa{(1-\gamma) \nu_0 + \gamma P\transpose \dd_k^* \middle\|(1-\gamma) \nu_0 + \gamma P\transpose \qq_k} 
 + H\pa{\dd_k^*\middle\|\qqd_k}
\\
 &\qquad\qquad\mbox{(using that $\dd_k^*$ and $\qqd_k$ are valid occupancy measures)}
\\
&\le (1-\gamma) D\pa{\nu_0 \middle\|\nu_0} + \gamma D\pa{P\transpose \dd_k^* \middle\|P\transpose \qqd_k}
+ H\pa{\dd_k^*\middle\|\qqd_k}
\\
 &\qquad\qquad\mbox{(using the joint convexity of the relative entropy)}
\\
&\le \gamma D\pa{\dd_k^* \middle\|\qq_k} + H\pa{\dd_k^*\middle\|\qqd_k},
\end{align*}
where the final step follows from the using information-processing inequality for the relative entropy. Reordering the 
terms concludes the proof.
\end{proof}
Armed with the above two lemmas, we are now ready to present the proof of Theorem~\ref{thm:propagation}.
\begin{proof}[Proof of Theorem~\ref{thm:propagation}]
The proof is based on direct calculations inspired by the classical mirror descent analysis. We let $d^* = 
\argmax_{d\in\MM^*}\iprod{d}{r}$ and $\qq^*$ be any state-action distribution satisfying $(\qq^*,d^*)\in\MM_\Phi$.
We first express the 
divergence between the comparator $\qq^*$ and the unprojected iterate $\tq_k$:
\begin{align*}
 D(\qq^*\|\tq_k) &= \sum_{x,a} \qq^*(x,a) \log \frac{\qq^*(x,a)}{\tq_k(x,a)} 
 = \sum_{x,a} \qq^*(x,a) \log \frac{\qq(x,a)}{\qqd_{k-1}(x,a)} - \sum_{x,a} \qq^*(x,a) \log 
\frac{\tq_{k}(x,a)}{\qqd_{k-1}(x,a)}
\\
& = D(\qq^*\|\qqd_{k-1}) - \eta \iprod{\qq^*}{r + \gamma PV_k - Q_k} + \eta \rho_k
\\
& = D(\qq^*\|\qqd_{k-1}) - \eta \iprod{\qq^*}{r - \Phi\theta_k} + \eta \iprod{\dd^*}{EV_k} + \eta \bpa{\rho_k + 
(1-\gamma)\iprod{\nu_0}{V_k}}
\\
&\qquad\qquad\qquad\mbox{(using $\dd^* = \gamma P\transpose \qq^* + (1-\gamma) \nu_0$ and $Q_k = 
\Phi\theta_k$)}
\\
& = D(\qq^*\|\qqd_{k-1}) - \eta \iprod{\qq^*}{r} + \eta\iprod{\dd^*}{E V_k - \Phi\theta_k} + \eta \GG_k(\theta_k)
\\
&\qquad\qquad\qquad\mbox{(using $\Phi\transpose \dd^* = \Phi\transpose \qq^*$ and the form of $\GG_k$)}
\\
& \le D(\qq^*\|\qqd_{k-1}) - \eta \iprod{\qq^*}{r} + \eta\iprod{\dd^*}{E V_k - \Phi\theta_k} + \eta \GG_k(\theta_k^*) + 
\eta 
\varepsilon_k
\\
&\qquad\qquad\qquad\mbox{(using the suboptimality guarantee of $\theta_k$)}
\\
& \le D(\qq^*\|\qqd_{k-1}) - \eta \iprod{\qq^*}{r} + \eta\iprod{\dd^*}{E V_k - \Phi\theta_k} + \eta \iprod{\qq_k^*}{r} 
- 
D(\qq_k^*\|\qqd_{k-1}) - \frac{\eta H(\dd^*_k\|\dd_{k-1})}{\alpha}
+ \eta \varepsilon_k
\\
&\qquad\qquad\qquad\mbox{(using the dual form~\eqref{eq:Gdual} of $\GG_k(\theta_k)$)}
\\
& \le D(\qq^*\|\qqd_{k-1}) - \eta \iprod{\qq^*}{r} + \eta\iprod{\dd^*}{E V_k - \Phi\theta_k} + \eta \iprod{\dd_k}{r} + 
\eta \iprod{\dd_k^*-\dd_k}{r} + \eta \varepsilon_k
\\
&\qquad\qquad\qquad\mbox{(using that $\iprod{\dd_k^*}{r} = \iprod{\qq_k^*}{r}$ by 
Proposition~\ref{prop:real})}
\\
& \le D(\qq^*\|\qqd_{k-1}) - \eta \iprod{\qq^*}{r} + \eta\iprod{\dd^*}{E V_k - \Phi\theta_k} + \eta \iprod{\dd_k}{r} + 
\eta 
\onenorm{\dd_k^*-\dd_k} + \eta \varepsilon_k,
\end{align*}
where we used $\infnorm{r} \le 1$ in the last step.
After reordering and noticing that $\iprod{\qq^*}{r} = \iprod{\dd^*}{r}$, we obtain
\[
 \iprod{\dd^* - \dd_k}{r} \le \frac{D(\qq^*\|\qqd_{k-1}) - D(\qq^*\|\tq_{k})}{\eta} + \iprod{\dd^*}{E V_k - Q_k}
 + \eta \onenorm{\dd_k^*-\dd_k} + \varepsilon_k.
\]
Furthermore, we have
\begin{align*}
 H(\dd^*\|\dd_k) &= \sum_{x,a} \dd^*(x,a) \log \frac{\pi^*(a|x)}{\pi_k(a|x)} = 
 \sum_{x,a} \dd^*(x,a) \log \frac{\pi^*(a|x)}{\pi_{k-1}(a|x)} - \sum_{x,a} \qq(x,a) \log 
\frac{\pi_k(a|x)}{\pi_{k-1}(a|x)}
 \\
 &= H(\dd^*\|\dd_{k-1}) - \alpha \iprod{\dd^*}{Q_k - EV_k}.
\end{align*}
Plugging this equality back into the previous bound, we finally obtain
\begin{align*}
 &\iprod{\dd^* - \dd_k}{r} \le 
 \frac{D(\qq^*\|\qqd_{k-1}) - D(\qq^*\|\tq_{k})}{\eta} + \frac{H(\dd^*\|\dd_{k-1}) - 
H(\dd^*\|\dd_k)}{\alpha} + \onenorm{\dd_k^*-\dd_k} + \varepsilon_k
\\
&\qquad = \frac{D(\qq^*\|\qqd_{k}) - D(\qq^*\|\tq_{k})}{\eta} + \frac{D(\qq^*\|\qqd_{k-1}) - 
D(\qq^*\|\qqd_{k})}{\eta} + \frac{H(\dd^*\|\dd_{k-1}) - 
H(\dd^*\|\dd_k)}{\alpha} + \onenorm{\dd_k^*-\dd_k} + \varepsilon_k.
\end{align*}
Summing up for all $k$ and omitting some nonpositive terms, we obtain
\begin{equation}\label{eq:regbound}
 \sum_{k=1}^K \iprod{\dd^* - \dd_k}{r} \le \frac{D(\qq^*\|\qqd_{0})}{\eta} + \frac{H(\dd^*\|\dd_0)}{\alpha}
 + \sum_{k=1}^K \pa{\frac{D(\qq^*\|\qqd_{k}) - D(\qq^*\|\tq_{k})}{\eta} + \onenorm{\dd_k^*-\dd_k} + \varepsilon_k}
\end{equation}
Combining Lemma~\ref{lem:qddist} with Pinsker's inequality, we can bound 
\[
\onenorm{\dd_k^*-\dd_k} \le \sqrt{2 D(\dd_k^*\|\dd_k)} \le \sqrt{\frac{2 H(\dd_k^*\|\dd_k)}{1-\gamma}} \le 
\sqrt{\frac{2 \alpha \varepsilon_k}{1-\gamma}},
\]
where in the last step we also used Lemma~\ref{lem:projection} that implies $H(\dd_k^*\|\td_k) \le \alpha 
\varepsilon_k$. Thus, the remaining challenge is to bound the terms 
$D(\qq^*\|\qqd_{k}) - D(\qq^*\|\tq_{k})$. In order to do this, let us introduce the Bregman projection of $\tnu_k$ to 
the space of occupancy measures, $\tnu_k^* = \argmin_{\nu\in\Delta_\gamma(\X)} D(\nu\|\tnu_k)$. Then, we can write
\begin{align*}
 D(\qq^*\|\qqd_k) - D(\qq^*\|\tq_k) =& D(\nu^*\|\nu_k) - D(\nu^*\|\tnu_k) 
 \\
 =& D(\nu^*\|\nu_k) - D(\nu^*\|\tnu_k^*) + 
D(\nu^*\|\tnu_k^*) -  D(\nu^*\|\tnu_k)
\\
\le& D(\nu^*\|\nu_k) - D(\nu^*\|\tnu_k^*) - D(\tnu_k^*\|\tnu_k) \le D(\nu^*\|\nu_k) - D(\nu^*\|\tnu_k^*) ,
\end{align*}
where the first inequality is the generalized Pythagorean inequality that uses the fact that $\tnu_k^*$ is the 
Bregman 
projection of $\tnu_k$ (cf.~Lemma~11.3 in \citealp{CBLu06:book}). By using the chain rule of the relative entropy and 
appealing to Lemma~\ref{lem:qddist}, we have
\[
 D(\nu_k^*\|\tnu_k) = D(\qq_k^*\|\tq_k) - H(\qq_k^*\|\tq_k) \le D(\qq_k^*\|\tq_k) \le \eta \varepsilon_k,
\]
which implies $D(\tnu_k^*\|\tnu_k)\le D(\nu_k^*\|\tnu_k) \le \eta \varepsilon_k$ due to the properties of the 
projected point $\tnu_k^*$. To proceed, we use 
the inequality $\log(u) \le u - 1$ that holds for all $u>-1$ to write
\begin{align*}
 D(\nu^*\|\nu_k) -  D(\nu^*\|\tnu_k^*) &= \sum_{x} \nu^*(x) \log\frac{\nu_k(x)}{\tnu_k^*(x)} 
 \le \sum_{x} \nu^*(x) \pa{\frac{\nu_k(x)}{\tnu_k^*(x)}  - 1} = \sum_{x} \frac{\nu^*(x)}{\tnu_k^*(x)} \pa{\nu_k(x)  - 
\tnu_k^*(x)}
\\
&\le \sum_{x} \frac{\nu^*(x)}{\tnu_k^*(x)} \left|\nu_k(x)  - \tnu_k^*(x)\right| \le \max_{x'} 
\frac{\nu^*(x')}{\tnu_k^*(x')} 
\sum_x \left|\nu_k(x)  - \tnu_k^*(x)\right| 
\\
&\le C_\gamma \onenorm{\nu_k - \tnu_k^*} \le C_\gamma \pa{\onenorm{\nu_k - \nu_k^*} + \onenorm{\nu_k^* - \tnu_k} + 
\onenorm{\tnu_k - \tnu_k^*}}
\\
&\qquad\qquad\qquad\mbox{(by Assumption~\ref{ass:conc} and the triangle inequality)}
\\
&\le C_\gamma \pa{\sqrt{\frac{2H(\dd_k^*\|\dd_k)}{1-\gamma}}    +  \sqrt{2D(\nu_k^*\|\tnu_k)}  +  
\sqrt{2D(\tnu_k^*\|\tnu_k)}} 
\\
&\qquad\qquad\qquad\mbox{(by applying Pinsker's inequality twice and invoking Lemma~\ref{lem:qddist})}
\\
& \le C_\gamma \sqrt{\frac{2\alpha \varepsilon_k}{1-\gamma}} + C_\gamma \sqrt{8\eta\varepsilon_k}.
\end{align*}

Plugging all bounds back into the bound of Equation~\eqref{eq:regbound} and using that $C_\gamma \ge 1$, we obtain
\[
\sum_{k=1}^K \iprod{\dd^* - \dd_k}{r} \le \frac{D(\qq^*\|\dd_{0})}{\eta} + \frac{H(\dd^*\|\dd_0)}{\alpha}
 + C_\gamma \pa{\sqrt{\frac{8\alpha}{1-\gamma}} + \sqrt{8\eta}}\sum_{k=1}^K \sqrt{\varepsilon_k}+ \sum_{k=1}^K 
\varepsilon_k, 
\]
thus concluding the proof of the theorem.
\end{proof}

\subsection{The proof of Theorem~\ref{thm:conc}}\label{app:conc}
We will prove the following, more general version of the theorem below:
\setcounter{theorem}{1}
\begin{theorem}(General statement)
Let $\QQ = \ev{Q_\theta: \infnorm{Q_\theta} \le B'}$ for some $B'>0$ and $\Theta$ be the corresponding set of parameter 
vectors, and let $\NN_{\QQ,\epsilon}$ be the $\epsilon$-covering number of $\QQ$ with respect to the $\ell_\infty$ 
norm. Furthermore, define $B = 1+(1+\gamma)B'$, and assume that $\eta B \le 1$ holds. Then, with probability at least 
$1-\delta$, the following holds:
\[
 \sup_{\theta\in\Theta} \left|\wh\GG_k(\theta) - \GG_k(\theta) \right| \le 8 \eta B^2 + 56 
\sqrt{\frac{\log(2\NN_{\QQ,1/\sqrt{N}}/\delta)}{N}}.
\]
\end{theorem}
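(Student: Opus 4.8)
The plan is to first cancel the term $(1-\gamma)\iprod{\nu_0}{\Vtet}$, which is identical in $\wh{\GG}_k$ and $\GG_k$, so that the whole difference reduces to controlling the two log-partition terms $f(\theta) = \frac1\eta\log\bpa{\frac1N\sum_n e^{\eta\wh{\Delta}_\theta(\xi_{k,n})}}$ and $L(\theta) = \frac1\eta\log\bpa{\sum_{x,a}\qq_k(x,a)e^{\eta\Delta_\theta(x,a)}}$. The quantities linking them are the empirical mean $\wh{m}(\theta) = \frac1N\sum_n \wh{\Delta}_\theta(\xi_{k,n})$ and the true mean $m(\theta) = \sum_{x,a}\qq_k(x,a)\Delta_\theta(x,a)$. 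Because $(X_{k,n},A_{k,n})\sim\qq_k$ and $X'_{k,n}\sim P(\cdot|X_{k,n},A_{k,n})$, we have $\EE{\wh{\Delta}_\theta(\xi_{k,n})} = \EEs{\Delta_\theta}{\qq_k} = m(\theta)$; this unbiasedness is what makes the plan work, and it is exactly where the next-state expectation that sits inside the exponent of the true loss gets pulled outside. I would therefore split $f(\theta) - L(\theta) = \bpa{f(\theta) - \wh{m}(\theta)} + \bpa{\wh{m}(\theta) - m(\theta)} + \bpa{m(\theta) - L(\theta)}$ and treat the three pieces separately.

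The two outer pieces constitute the bias, and I expect them to be the conceptually important part. Each is a gap between a (log-sum-exp) soft average and an ordinary average of a bounded variable: $f(\theta)$ is the $\eta$-tilted log-mean of $\wh{\Delta}_\theta$ under the uniform distribution on the $N$ samples, while $L(\theta)$ is the $\eta$-tilted log-mean of $\Delta_\theta$ under $\qq_k$. Since $\infnorm{Q_\theta},\infnorm{\Vtet}\le B'$ force $|\wh{\Delta}_\theta|\le B$ and $|\Delta_\theta|\le B$ with $B = 1+(1+\gamma)B'$, Hoeffding's lemma (sub-Gaussianity of a bounded variable) gives $0\le f(\theta) - \wh{m}(\theta)\le \frac{\eta(2B)^2}{8}$ and $0\le L(\theta) - m(\theta)\le\frac{\eta(2B)^2}{8}$, the lower bounds being Jensen's inequality. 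By the triangle inequality this yields $|f(\theta)-L(\theta)| \le |\wh{m}(\theta)-m(\theta)| + \eta B^2$, which is the promised $O(\eta B^2)$ bias controlled entirely by the regularization strength $\eta$ (this is where $\eta B\le1$ keeps the tilting in its well-behaved regime).

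It then remains to bound the centered term $\wh{m}(\theta)-m(\theta)$ uniformly over $\theta$. For fixed $\theta$ this is an average of $N$ i.i.d.\ variables $\wh{\Delta}_\theta(\xi_{k,n})$ lying in an interval of width at most $2B$ with mean $m(\theta)$, so Hoeffding's inequality gives $|\wh{m}(\theta)-m(\theta)| = O\bpa{B\sqrt{\log(1/\delta)/N}}$. To upgrade to a supremum over $\theta\in\Theta$ I would pass to an $\epsilon$-net of $\QQ$ in $\infnorm{\cdot}$ of size $\NN_{\QQ,\epsilon}$, apply the fixed-$\theta$ bound at each center with a union bound ($\delta\mapsto\delta/\NN_{\QQ,\epsilon}$), and absorb the discretization error using that both $\GG_k$ and $\wh{\GG}_k$ are $2$-Lipschitz in $Q_\theta$ with respect to $\infnorm{\cdot}$ --- precisely the bound $\onenorm{\nabla_Q\GG}\le2$ stated earlier, which holds verbatim for the empirical loss since its $Q$-gradient is again a difference of probability-weighted feature averages (softmax weights of mass $1$, plus the $(1-\gamma)$ contribution from $\iprod{\nu_0}{\Vtet}$). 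Choosing $\epsilon = 1/\sqrt N$ makes the discretization contribute only $O(1/\sqrt N)$, producing the $\sqrt{\log(\NN_{\QQ,1/\sqrt N}/\delta)/N}$ term; for the concrete factored-linear case one specializes $\log\NN_{\QQ,1/\sqrt N} = O(m\log(BN))$ by the standard volumetric covering bound for a radius-$B'$ ball in $\real^m$, recovering the explicit statement.

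I expect the main obstacle to be the bias analysis rather than the concentration: the empirical loss is a genuinely biased estimator of the true loss because the next-state sample appears inside the convex exponential, and the crux is recognizing that this bias is nothing more than the sum of two convexity (Jensen) gaps of a log-sum-exp, each shrinking linearly in $\eta$. A secondary technical point to organize carefully is the interplay between the $1/\eta$ prefactor of the logarithm and the width of the tilted variables in the statistical term: by concentrating $\wh{\Delta}_\theta$ directly rather than $e^{\eta\wh{\Delta}_\theta}$, as in the decomposition above, no spurious $1/\eta$ survives and the final rate is governed purely by $\eta$ (bias) and $N$ together with the covering number (statistical error).
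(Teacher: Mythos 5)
Your route is genuinely different from the paper's, and most of it is sound. The paper never introduces the means $\wh{m}(\theta)$ and $m(\theta)$: it works directly with the tilted sums $W=\frac1N\sum_n e^{\eta\wh{\Delta}_\theta(\xi_{k,n})}$ and $\bW=\frac1N\sum_n e^{\eta\Delta_\theta(X_{k,n},A_{k,n})}$, shows $\left|\EE{W}-\EE{\bW}\right|\le\eta^2B^2$ via conditional Jensen together with $e^u\le 1+u+u^2$, concentrates $W$ around $\EE{W}$ with McDiarmid's inequality, and only then takes logarithms. You instead linearize through $\wh{m}$ and $m$ and control the two log-sum-exp gaps by Hoeffding's lemma. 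Your bias analysis is correct and arguably cleaner than the paper's, and it even improves the bias constant ($\eta B^2$ rather than $8\eta B^2$); your covering step (union bound over a $1/\sqrt{N}$-net, discretization absorbed by $2$-Lipschitzness of both losses in $\infnorm{\cdot}$) matches the paper's in substance.

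The gap is in the statistical term. Hoeffding applied to $\wh{m}-m$, an i.i.d.\ average of variables with range $2B$, can only give $|\wh{m}-m|\le B\sqrt{2\log(2/\delta)/N}$, so your final bound has the form $\eta B^2+O\pa{B\sqrt{\log(\NN_{\QQ,1/\sqrt{N}}/\delta)/N}}$. The statement you must prove has an absolute constant, $56\sqrt{\log(2\NN_{\QQ,1/\sqrt{N}}/\delta)/N}$, with no factor of $B$; for large $B$ your bound is strictly weaker, and no rebalancing fixes it (AM--GM between your two terms produces $\eta B^2+\log(\cdot)/(\eta N)$, not the stated form). The factor of $B$ is intrinsic to your decomposition: as $\eta\to 0$ (which is permitted under $\eta B\le 1$) your three terms collapse to $\wh{m}-m$, whose fluctuations genuinely scale like $B/\sqrt{N}$. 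The paper evades $B$ by concentrating the tilted variables themselves: under $\eta B\le1$ each $e^{\eta\wh{\Delta}_\theta}$ lies in $[e^{-1},e]$, the paper asserts a bounded-difference constant $\eta e^{2\eta B}/N\le\eta e^2/N$ for $W$, and this $\eta$ cancels against the $1/\eta$ in front of the logarithm, leaving $e^4\sqrt{\log(2/\delta)/N}$. You should be aware, however, that this step of the paper drops the factor $|s_n-s_n'|\le 2B$ that the mean value theorem actually produces (the correct per-coordinate difference is $\frac1N\left|e^{\eta s_n}-e^{\eta s_n'}\right|\le \frac{\eta\,|s_n-s_n'|\,e^{\eta B}}{N}\le\frac{2e\,\eta B}{N}$); restoring it turns the paper's statistical term into the same $B\sqrt{\log(\cdot)/N}$ that you obtained. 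So your argument very plausibly proves the correct version of the result, but it does not prove the inequality as stated; to get the $B$-free constant you would have to reproduce the paper's cancellation, including its questionable bounded-difference estimate.
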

The proof of the version stated in the main body of the paper follows from bounding the covering number of our linear 
Q-function class as $\NN_{\QQ,\epsilon} \le (1 + 4B/\epsilon)^m$. 

\begin{proof}
We first prove a concentration bound for a fixed $\theta$ and then provide a uniform guarantee through a covering 
argument. 

For the first part, let us fix a confidence level $\delta'>0$ and an arbitrary $\theta$, and define the shorthand 
notation $\wh{S}_n = \wh{\Delta}_\theta(X_{k,n},A_{k,n},X'_{k,n})$ and 
$S_n = \Delta_\theta(X_{k,n},A_{k,n})$. Note that, by definition, these random 
variables are bounded in the interval $[-(\gamma+1)B',1+(\gamma+1)B'] \subset [-B,B]$. 
Furthermore, let us define the notation 
$\EEs{\cdot}{X'} = \EEc{\cdot}{\ev{X_{k,n},A_{k,n}}_{n=1}^N}$ and let
\[
 W = \frac 1N \sum_{n=1}^N e^{\eta \wh{S}_n} \qquad \mbox{and} \qquad \bW = \frac 1N \sum_{n=1}^N e^{\eta 
S_n}.
\]
We start by observing that, by Jensen's inequality, we obviously have $\EEs{W}{X'} \le \bW$. Furthermore, by using 
the inequality $e^u \le 1 + u + u^2$ that holds for all $u\le 1$, we can further write 
\begin{align*}
 \bW &\le \frac 1N \sum_{n=1}^N \pa{1 + \eta S_n + \eta^2 S_n^2} \le 
 \EEs{\frac 1N \sum_{n=1}^N \pa{1 + \eta \wh{S}_n}}{X'}  + \eta^2 S_n^2
 \\
 &\le \EEs{\frac 1N \sum_{n=1}^N e^{\eta \wh{S}_n}}{X'}  + \eta^2 S_n^2 = \EEs{W}{X'} + \eta^2 B^2,
\end{align*}
where in the last line we used the inequality $1+u\le e^u$ that holds for all $u$ and our upper bound on $\wh{S}_n$. 
Thus, taking expectations with respect to $X'$, we get
\begin{equation}\label{eq:WbW}
 \EE{W} \le \EE{\bW} \le \EE{W} + \eta^2 B^2.
\end{equation}

To proceed, we define the function
\[
 f(s_1,s_2,\dots,s_N) = \frac 1N \sum_{n=1}^N e^{\eta s_n}
\]
and notice that it satisfies the bounded-differences property
\[
 f(s_1,s_2,\dots,s_n,\dots,s_N) - f(s_1,s_2,\dots,s_n',\dots,s_N) = \frac 1N \pa{e^{\eta s_n} - e^{\eta 
s_n'}} \le \frac {\eta e^{2\eta B}}N.
\]
Here, the last step follows from Taylor's theorem that implies that there exists a $\chi\in(0,1)$ such that
\[
 e^{\eta s_n'} = e^{\eta s_n} + \eta e^{\eta \chi\pa{s_n' - s_n}}
\]
holds, so that $e^{\eta s_n'} - e^{\eta s_n} = \eta e^{\eta \chi\pa{s_n' - s_n}} \le \eta e^{2\eta B}$, where we used 
the assumption that $|s_n - s_n'| \le 2B$ in the last step. Notice that our assumption $\eta B\le 1$ further implies 
that $e^{2\eta B}\le e^2$. Thus, also noticing that $W = f(S_1,\dots,S_N)$, we can apply McDiarmid's 
inequality that to show that the following holds with probability at least $1-\delta'$:
\begin{equation}\label{eq:Wdiff}
 |W - \EE{W}| \le \eta e^2 \sqrt{\frac{\log(2/\delta')}{N}}.
\end{equation}

Now, let us observe that the difference between the LBE and its empirical counterpart can be written as
\begin{align*}
\wh{\GG}_k(\theta) - \GG_k(\theta) = \frac 1\eta \log\pa{W} - \frac 1\eta \log\pa{\EE{\bW}} = \frac 1\eta 
\log\pa{\frac{W}{\EE{\bW}}}.
\end{align*}
Thus, by combining Equations~\eqref{eq:WbW} and~\eqref{eq:Wdiff}, we obtain that
\begin{align*}
 \wh{\GG}_k(\theta) - \GG_k(\theta) &= \frac 1\eta \log\pa{1 + \frac{W - \EE{\bW}}{\EE{\bW}}} 
 \le \frac 1\eta \log\pa{1 + \frac{W - \EE{W}}{\EE{\bW}}} 
 \\
 &\le 
 \frac{W - \EE{W}}{\eta\EE{\bW}} \le 
e^4 \sqrt{\frac{\log(2/\delta')}{N}},
\end{align*}
where we used the inequality $\log(1+u) \le u$ that holds for $u>-1$ and our assumption on $\eta$ that implies $\bW\ge 
e^{-2}$. 
Similarly, we can show
\begin{align*}
 \GG_k(\theta) - \wh{\GG}_k(\theta) &= \frac 1\eta \log\pa{1 + \frac{\EE{\bW} - W}{W}}
 \le \frac 1\eta \log\pa{1 + \frac{\EE{W} - W + \eta^2 B^2}{W}}
 \\
 &\le 
 \frac{\EE{W} - W + \eta^2 B^2}{\eta W} \le 
e^4 \sqrt{\frac{\log(2/\delta')}{N}} + \eta e^2 B^2,
\end{align*}
This concludes the proof of the concentration result for a fixed $\theta$.

In order to prove a bound that holds uniformly for all values of $\theta$, we will consider a covering of the space of 
Q functions $Q_\theta$ bounded in terms of the supremum norm $\QQ = \ev{Q_\theta:\, \theta\in\real^m, \infnorm{Q_\theta} 
\le B}$. The 
corresponding set of parameters will be denoted as $\Theta$. To define the covering, we fix an $\epsilon > 0$ and 
consider a set $\CC_{\QQ,\epsilon} \subset \QQ$ of minimum cardinality, such that for all $Q_\theta \in \QQ$, there exists a 
$\theta'\in\CC_{\QQ,\epsilon}$ satisfying $|\GG_k(\theta) - \GG_k(\theta')| \le \epsilon$. Defining the covering number 
$\NN_{\QQ,\epsilon} = \left|\CC_{\QQ,\epsilon}\right|$ and $\epsilon = 1/\sqrt{N}$, we can combine the above 
concentration result with a union bound over the covering $\CC_{\QQ,\epsilon}$ to get that
\[
 \sup_{\theta\in\Theta} \left|\GG_k(\theta) - \wh{\GG}_k(\theta)\right| \le \pa{e^4 + 1}
\sqrt{\frac{\log(2\NN_{\QQ,\epsilon}/\delta)}{N}} + \eta e^2 B^2
\]
holds with probability at least $1-\delta$. Upper-bounding the constants $e^2 < 8$ and $e^4+1 < 56$ concludes the proof.
\end{proof}

\subsection{The proof of Proposition~\ref{prop:lossgrad}}\label{app:lossgrad}
For each $i$, the partial derivatives of $S(\theta,z)$ with respect to $\theta_i$ can written as
\begin{align}
\frac{\partial S(\theta,z)}{\partial \theta_i}\sum_n z(n)\frac{\partial  \wh{ 
\Delta}(X_{k,n},A_{k,n},X_{k,n}')}{\partial 
\theta_i} +  \sum_{x,y,a}(1-\gamma)\nu_0(x)\frac{\partial V_\theta(x)}{\partial Q_\theta(y,a)}\frac{\partial 
Q_\theta(y,a)}{\partial \theta_i}.
\label{initialgrad}
\end{align}
Computing the derivatives
\[
\frac{\partial V_\theta(x)}{\partial Q_\theta(y,a)}= \II{x=y} \frac{\pi_k(a|x)e^{\alpha Q_\theta(x,a)}}{\sum_{a'} 
\pi_k(a'|x)e^{\alpha Q_\theta(x,a')}}=  \II{x=y}\pi_{k,\theta}(a|x)
\]
and 
\begin{align*}
\frac{\partial  \wh{ \Delta}(X_{k,n},A_{k,n},X'_{k,n})}{\partial \theta_i}=&\gamma\sum_{x,a}\frac{\partial 
V_\theta(X'_{k,n})}{\partial Q_\theta(x,a)}\frac{\partial Q_\theta(x,a)}{\partial \theta_i}-\frac{\partial 
Q_\theta(X_{k,n},A_{k,n})}{\partial \theta_i}\\
=&\gamma\sum_a\pi_{k,\theta}(X'_{k,n},a)\varphi_i(X'_{k,n},a) - \varphi_i(X_{k,n},A_{k,n})
\end{align*}
and plugging them back in Equation~\eqref{initialgrad}, we get 
\begin{align*}
 \nabla_\theta S(\theta,z)=\sum_{n=1}^N z(n) \pa{\gamma \sum_a \pi_{k,\theta}(a|X_{k,n}) \varphi(X_{k,n}',a) - 
\varphi(X_{k,n},A_{k,n})}  + \sum_{x,a}(1-\gamma)\nu_0(x) \pi_{k,\theta}(a|x) \varphi(x,a).
\end{align*}
The statement of the proposition can now be directly verified using the definitions of $X,A,X'$ and 
$\overline{X},\overline{A}$.
\qed

\newpage
\section{Experimental details and further experiments} \label{app:extra_experiments}

\paragraph{Environment description.}
We use Double-chain and Single-Chain from \citet{furmston2010variational}, River Swim from \citet{strehl2008analysis}, 
WideTree from \citet{ayoub2020model}, CartPole from \citet{brockman2016openai}, Two-State Deterministic from 
\citet{bagnell2003covariant}, windy-grid world from \citet{SB18}, and a new Two-State Stochastic that we present in 
Figure~\ref{fig:two-state-mdp}.

\vspace{-.1cm}
\paragraph{Code environment.} We use the open-source implementation of these algorithms from \citet{Curi2019RLLib} 
which 
is based on PyTorch \citep{paszke2017automatic}. 

\vspace{-.1cm}
\paragraph{Hyperparameters.} In Table~\ref{tab:hyper-param} we show the hyperparameters we use for each environment. 
We fix the regularization parameters as $\eta=\alpha$ and set them so that $1/\eta$ matches the average 
optimal returns in each game.
As optimizers for the player controlling the $\theta$ parameters in \OPT (the learner), we use SGD \citep{RM51} and in 
CartPole we use Adam \citep{kingma2014adam}. For the player controlling the distributions $z$ (the sampler), 
we use the exponentited gradient (EG) update explained in the main text as the default choice, and use the best response (BR) for 
CartPole:
\[
  z_{k,\tau+1}(n) \propto e^{\eta \wh{\Delta}_{k,\tau}(\xi_{k,n})}.
\]
The learning rates $\beta$ and $\beta'$ were picked as the largest values that resulted in stable optimization 
performance. 

\vspace{-.1cm}
\paragraph{Features for CartPole}
We initialize a two-layer neural network with a hidden layer of 200 units and ReLU activations, and use the default 
initialization from PyTorch. We freeze the first layer and use the outputs of the 
activations as state features $\phi':\X\ra \real^{200}$. 
To account for early termination, we multiply each of the features with an indicator feature $\delta(x)$ that takes the 
value $1$ if the transition is valid and $0$ if the next transition terminates. The final state features are given by 
the product $\phi(x) = \phi'(x) \delta(x) \in \mathbb{R}^{200}_{\geq 0}$. Finally, we define state-action features 
$\varphi:\X\times\A\ra \real^{200\times 2}$ by letting $\varphi_{i,b}(x,a) = \phi_i(x) \II{a=b}$ for all $i$ and both 
actions $b\in\A$.

\begin{table}[h]
	\caption{Experiment hyperparameters. The ``-'' symbol indicates that the default values were used, whereas ``x'' 
symbol indicates that the algorithm does not require such hyperparameter.} \label{tab:hyper-param}
    \centering
    \begin{tabular}{l|lllllllll}
        \toprule
        & $\eta$ & $\alpha$ & $\beta$ & $\beta'$ & $\gamma$ & $T$ & Learner & Sampler & Features \\ 
        \midrule 
        Default & 0.5 & 0.5 & 0.1 & 0.1 & 1.0 & 300 & SGD & EG & Tabular\\ 
        Cart Pole & 0.01 & 0.01 & 0.08 & x & 0.99 & - & Adam & BR & Linear\\
        Double Chain & - & - & 0.01 & - & - & -  & - & - & -\\
        River Swim & 2.5 & 2.5 & 0.01 & - & - & - & - & - & -\\
        Single Chain & 5.0 & 5.0 & 0.05 & - & - & - & - & - & -\\
        Two State D & - & - & 0.05 & - & - & - & - & -& -\\
        Two State S & - & - & - & - & - & - & - & -& -\\
        Wide Tree & - & - & - & 0.05 & - & - & - & -& -\\
        Grid World & - & - & - & 0.03 & - & - & - & -& -\\
        \bottomrule
    \end{tabular}
\end{table}

\subsection{The effect of $\eta$ on the bias of the ELBE}
We propose a simple environment to study the magnitude of the bias of the ELBE as an estimator of the LBE. While 
Theorem~\ref{thm:conc} establishes that this bias is of order $\eta$, one may naturally wonder if larger 
values of $\eta$ truly results in larger bias, and if the bias impacts the learning procedure negatively. In this 
section, we show that there indeed exist MDPs where this issue is real.

\begin{figure}[ht]
\centering
\begin{tikzpicture}[auto,node distance=8mm,>=latex,font=\small]

    \tikzstyle{round}=[thick,draw=black,circle]

    \node[round] (s0) {$x_0$};
    \node[round,right=0mm and 20mm of s0] (s1) {$x_1$};
    \node[align=left] at (-0.6,0.75) [red] {$r_\textit{stay}=1$};
    \node[align=left] at (1.3,1) [blue] {$r_\textit{go}=6$};
        \node[align=left] at (2.2,-1.2) {$r_\textit{stochastic}=-3$};

    \draw[->] (s0)  [out=45,in=135, blue]  to (s1) ; 
    \draw[->] (s0) [out=135,in=180,loop, red] to (s0);
    \draw[->] (s1) [out=270,in=315] to (s0);
    \draw[->] (s1) [out=270,in=315,loop]  to (s1);
\end{tikzpicture}
\caption{ Two-state MDP for illustrating the effect of biased estimation of the logistic Bellman error through the 
empirical LBE. From $x_0$ there are two actions with deterministic effects: \textit{stay} and \textit{go}. The 
\emph{stay} action stays in $x_0$ and results in a reward of $1$, while the \emph{go} action moves to 
$x_1$ and results in a reward of $6$. From $x_1$ there is one single stochastic action 
that goes to $x_0$ or remains in $x_1$ with equal probability and has reward 
$-3$.}
\label{fig:two-state-mdp}
\end{figure}
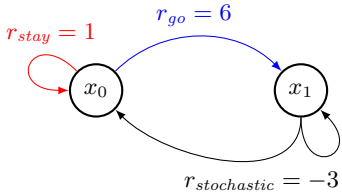

The MDP we consider has two states $x_0$ and $x_1$, with two actions available at $x_0$: \emph{stay} and \emph{go}, 
with the corresponding rewards being $r_{\textit{stay}}$ and $r_{\textit{go}}$, and the rest of the dynamics is as 
explained on Figure~\ref{fig:two-state-mdp}.
To simplify the reasoning, we set $\gamma=1$ and 
consider the case $r_\textit{stay}=0$ first. In this case, the two policies that systematically pick \textit{stay} and 
\textit{go} respectively would both have zero average reward. Despite this, it can be shown that minimizing the 
empirical LBE in \QREPS converges to a policy that consistently picks the \emph{go} action for any choice of $\eta$. 
This is due to the ``risk-seeking'' effect of the bias in estimating the LBE that favors policies that promise higher 
extreme values of the return. This risk-seeking effect continues to impact the behavior of 
\QREPS even when $r_\textit{stay}=1$ and 
$\eta$ is chosen to be large enough---see the learning curves corresponding to various choices of $\eta$ in 
Figure~\ref{fig:biased_reps}. This suggests that the bias of the LBE can indeed be a concern in practical 
implementations in \QREPS, and that the guidance provided by Theorem~\ref{thm:conc} is essential for tuning 
this hyperparameter.

\begin{figure}[ht]
    \centering
	\includegraphics[scale=1]{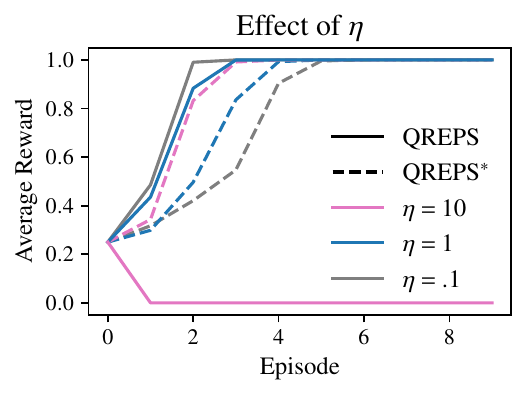}
	\caption{Effect of relative entropy regularization parameter $\eta$ on the performance of \QREPS. 
    On this figure, $\QREPS^*$ (dashed line) refers to the ideal version of the algorithm that minimizes the exact 
LBE, whereas \QREPS (solid line) is the sample-based implementation minimizing the empirical LBE. 
	For large $\eta$, \QREPS suffers from bias and only converges to the optimal policy for smaller values of $\eta$. 
    This effect is independent of the sample size $N$ used for the updates.
    On the other hand, the ideal updates performed by $\QREPS^*$ do not suffer from such bias.}
\label{fig:biased_reps}
\end{figure}

We also note that this bias issue can be alleviated if one has access to a simulator of the environment that allows 
drawing states from the transition distribution $P(\cdot|x,a)$ for any state-action pair in the replay 
buffer\footnote{Note that this condition is relatively mild since it only requires sampling follow-up states for 
state-action pairs that are present in the dataset. In contrast, sampling follow-up states for 
\emph{arbitrary} state-action pairs may be difficult in practical applications where the set of valid 
states may not be known a priori.}. Indeed, in this case one can replace $X'$ by an independently generated sample in 
the gradient estimator $\wh{g}_{k,t}(\theta)$ defined in Equation~\eqref{eq:gradest},
which allows convergence to the 
minimizer of the following semi-empirical 
version of the LBE:
\begin{equation}\label{eq:SELBE}
\begin{split}
 \wt{\dual}_k(\theta) =& \frac1\eta \log \pa{\frac 1N \sum_{n=1}^N e^{\eta 
\Delta_\theta(X_{k,n},A_{k,n})}} +(1-\gamma)\iprod{\nu_0}{\Vtet}.
 \end{split}
\end{equation}
As this definition replaces the empirical Bellman error by the true Bellman error in the exponent, it serves as an 
unbiased estimator of the LBE. Due to this property, one can set large values of the regularization parameter $\eta$ 
and converge faster toward the optimal policy. Thus, this implementation of \QREPS is preferable when one has sampling 
access to the transition function.

\subsection{The Effect of $\alpha$ on the Action Gap}
One interesting feature of the \QREPS optimization problem~\eqref{eq:QREPS_OP} is that it becomes essentially identical 
to the \REPS problem~\eqref{eq:REPS_OP} when setting $\alpha = +\infty$. To see this, let $\Psi$ and $\Phi$ be the 
identity maps so that the primal form of \QREPS becomes
\begin{equation*}
\begin{split}
 \text{maximize}_{\qq,d\in\UU}& \quad \iprod{\qq}{r} - \frac{1}{\eta} D(\qq\|\qq_0)
 \\
 \text{s.t.} \quad & E\transpose d = \gamma P\transpose \qq + (1-\gamma) \nu_0
 \\
 & d = \qq,
\end{split}\raisetag{-1cm}
\end{equation*}
which is clearly seen to be a simple reparametrization of the convex program~\eqref{eq:REPS_OP}. Furthermore, when 
$\alpha = +\infty$, the closed-form expression for $V$ in Proposition~\ref{prop:QREPS-structure} is replaced with the 
inequality constraint $V(x) \ge Q(x,a)$ required to hold for all $x,a$ and the dual function becomes
\[
\mathcal{G}'(Q,V) =\frac1{\eta} \log \left(\sum_{x,a}  {\qq_0(x,a) e^{\eta \pa{r(x,a) + \gamma 
\sum_{x'}P(x'|x,a) V(x') - Q(x,a)}}}\right) + (1-\gamma) \iprod{\nu_0}{V}.
\]
Since this function needs to be minimized in terms of $Q$ and $V$ and it is monotone decreasing in $Q$, its minimum is 
achieved when the constraints are tight and thus when $Q(x,a) = V(x)$ for all $x,a$. Thus, in this case $Q$ loses its 
intuitive interpretation as an action-value function, highlighting the importance of the conditional-entropy 
regularization in making \QREPS practical.

\begin{figure}[t]
\includegraphics[scale=1]{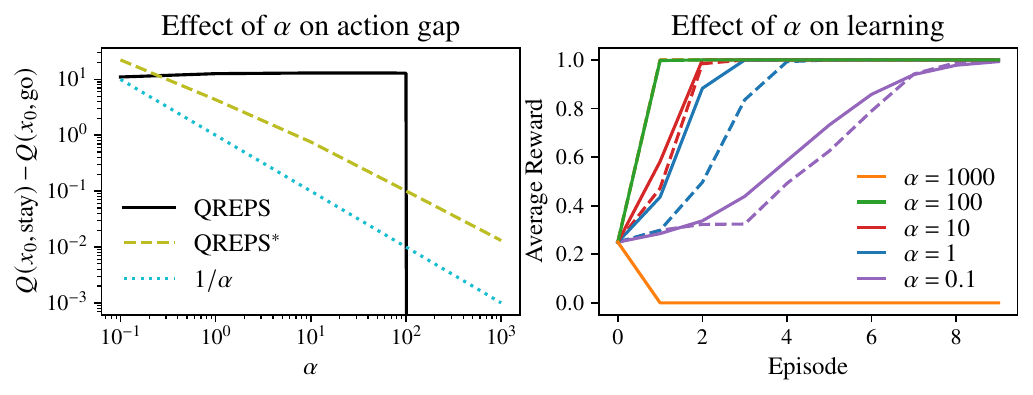}
	\caption{Effect of conditional-entropy regularization parameter $\alpha$ on the performance of \QREPS.
    On this figure, $\QREPS^*$ (dashed line) refers to the ideal version of the algorithm that minimizes the exact 
    LBE, whereas \QREPS (solid line) is the sample-based implementation minimizing the empirical LBE. 
    On the left plot, we see the effect of $\alpha$ on the action gap. 
    For $\QREPS^*$, the action gap decreases at a rate slightly slower than $1/\alpha$. 
    On the other hand, for \QREPS, the estimation noise dominates the action gap for smaller values of $\alpha$. 
    For larger values of $\alpha$, \QREPS fails to identify the optimal action which results in a negative action gap.
    On the right plot, we show the performance for different values of alpha. 
    For $\QREPS^*$, $\alpha$ plays the role of a learning rate: as $\alpha$ increases so does the learning speed.
    For \QREPS, this effect is only preserved for moderate values of $\alpha$, as the small action gap in the ideal 
    Q-values makes identifying the optimal action harder.
    For $\alpha=100$ (green solid line), the sign is identified correctly and it performs almost as if no regularization 
    was present. 
    For $\alpha=1000$ (orange solid line), the sign is misidentified and the wrong action is preferred, leading to poor 
performance.} 
    \label{fig:alg_hyperparams}
    \label{fig:action_gap}
\end{figure}

From a practical perspective, this suggests that the choice of $\alpha$ impacts the gap between the values of $Q$: as 
$\alpha$ goes to infinity, the gap between the values vanish and they become harder to distinguish based on noisy 
observations. Figure~\ref{fig:alg_hyperparams} shows that the action gap indeed decreases as $\alpha$ is increased, 
roughly at an asymptotic rate of $1/\alpha$, and that learning indeed becomes harder as the gaps decrease.

\end{document}